\newcommand{\E}{\mathbb{E}}
\newcommand{\V}{\mathbb{V}}
\theoremstyle{plain}
\newtheorem{lemma}{Lemma}
\newtheorem{fact}{Fact}
\newcommand{\sm}{\text{Softmax}}
\definecolor{darkred}{RGB}{192,0,0}
\definecolor{oldgold}{RGB}{191,144,0}
\definecolor{darkgreen}{RGB}{84,130,53}
\icmltitlerunning{DGM via Gapped Straight-Through}
\begin{document}

\twocolumn[
\icmltitle{Training Discrete Deep Generative Models via\\ Gapped Straight-Through Estimator
}



\icmlsetsymbol{equal}{*}

\begin{icmlauthorlist}
\icmlauthor{Ting-Han Fan}{equal,pri}
\icmlauthor{Ta-Chung Chi}{equal,cmu}
\icmlauthor{Alexander I. Rudnicky}{cmu}
\icmlauthor{Peter J. Ramadge}{pri}
\end{icmlauthorlist}

\icmlaffiliation{pri}{Department of Electrical and Computer Engineering, Princeton University, Princeton, NJ, USA}
\icmlaffiliation{cmu}{Language Technologies Institute, Carnegie Mellon University, Pittsburgh, PA, USA}

\icmlcorrespondingauthor{Ting-Han Fan}{tinghanf@princeton.edu}
\icmlcorrespondingauthor{Ta-Chung Chi}{tachungc@andrew.cmu.edu}

\icmlkeywords{Machine Learning, ICML}

\vskip 0.3in
]




\printAffiliationsAndNotice{\icmlEqualContribution} 

\begin{abstract}
While deep generative models have succeeded in image processing, natural language processing, and reinforcement learning, training that involves discrete random variables remains challenging due to the high variance of its gradient estimation process. 
Monte Carlo is a common solution used in most variance reduction approaches.
However, this involves time-consuming resampling and multiple function evaluations. 
We propose a Gapped Straight-Through (GST) estimator to reduce the variance without incurring resampling overhead. This estimator is inspired by the essential properties of Straight-Through Gumbel-Softmax. We determine these properties and show via an ablation study that they are essential. Experiments demonstrate that the proposed GST estimator enjoys better performance compared to strong baselines on two discrete deep generative modeling tasks, MNIST-VAE and ListOps.
\end{abstract}

\section{Introduction}
Deep generative models (DGM) \citep{ruthotto2021introduction,kingma2019introduction,goodfellow2020generative,rezende2015variational} are deep neural networks that are capable of high-dimensional probability distributions modeling and random samples generation. These properties are especially useful in applications such as image processing \citep{korshunov2018deepfakes,song2021scorebased}, speech processing \citep{oord2016wavenet}, natural language processing \citep{radford2019language,chen2021evaluating}, and reinforcement learning \citep{ho2016generative,li2017infogail}. Among these tasks, some of which involve inherently discrete components hence necessitating the need of modeling discrete random variables. For example, structure learning \citep{nangia2018listops}, generative text modeling \citep{yang2017improved}, multi-agent control \citep{ryan2017multiagent}, and control with discrete/integer variables \citep{tang2020discretizing,fan2021soft}.
Training these discrete DGMs remains challenging mainly due to the discrete sampling process, which impedes the direct use of gradient backpropagation. Consequently, designing a high-quality gradient estimation technique for the discrete component becomes the key to success.

Existing gradient estimation techniques for discrete DGMs bifurcate into two paradigms: the REINFORCE estimator \citep{glynn1990reinforce,williams1992reinforce} and the Straight-Through Gumbel-Softmax (STGS) \citep{Chris2017gumbel,Jang2017gumbel}. 
The former is unbiased but with high variance, while the latter is of low variance but requires a continuous relaxation during the gradient computation (i.e., $h$ in Eq.~\eqref{eq:st_general}). 
Despite the differences, the Monte Carlo variance reduction technique has been used in both methods; for example, \citet{mnih14nvil,mnih16nvil,gu15muprop} for REINFORCE and \citet{paulus2021raoblackwellizing} for STGS. 
While the variance is reduced, side-effects such as multiple resampling and function evaluations emerge, which are also the main drawbacks that we address.

In this paper, we introduce the Gapped Straight-Through (GST) estimator, a variant of the Straight-Through estimator \citep{bengio2013st}, that adds a careful logit perturbation process. We decide to improve upon the STGS paradigm since it is believed to have low variance and can leave the loss function unmodified (Eq.~\eqref{eq:reinforce-proof} vs. \eqref{eq:repara-proof}). 
First, we show that STGS has a number of properties (\S\ref{sec:property}) essential for good performance (see ablation study in \S\ref{sec:ablation}). To our best knowledge, this is a new determination of the key properties that support STGS. 

\begin{figure*}[ht]
\centering
\includegraphics[width=\textwidth]{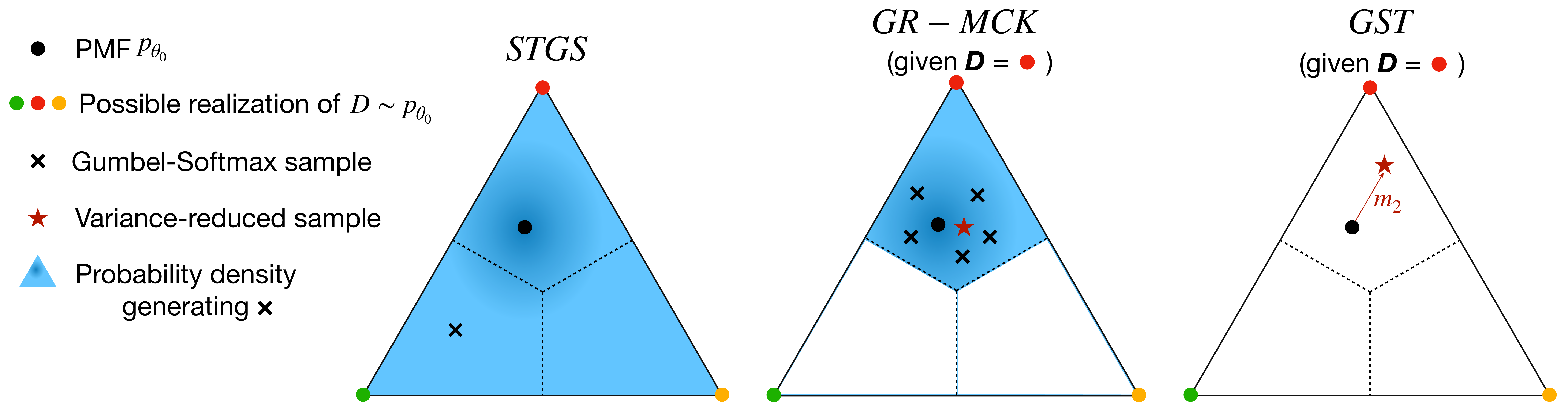}
\caption{\textbf{Different estimators operating on the probability simplex.} The blue shaded region represents the probability density of the Gumbel-Softmax sample in the simplex, where the degree of darkness is proportional to the likelihood. The dashed lines separate three categories (\textcolor{darkred}{top}, \textcolor{darkgreen}{left}, \textcolor{oldgold}{right}). STGS generates soft samples across all categories. GR-MCK \citep{paulus2021raoblackwellizing} chooses a random category $\bm D\sim p_{\theta_0}$ (e.g. \textcolor{darkred}{top} category), generates multiple (e.g. $K=100$) Gumbel-Softmax samples in $\bm D$'s category (i.e., conditioning on $\bm D$), and averages over them to obtain a variance-reduced one. GST chooses a random category $\bm D\sim p_{\theta_0}$ and applies $m_2$ to get a variance-reduced soft sample given $\bm D$ (\S\ref{sec:gst} and Figure~\ref{fig:illustration_gst}). Note that $m_2$ is deterministic given $\bm D$, so the resulting sample is of low variance. All estimators generate the soft samples and convert them into the hard samples using the Straight-Through trick (e.g. Algorithm~\ref{alg:gst}).}
\label{fig:illustration}
\end{figure*}

Second, the Gumbel randomness introduced in STGS is sufficient but not necessary for these properties. We show GST satisfies these properties with less randomness. In particular, GST first samples a random category $\bm D$ and then computes the deterministic perturbation given $\bm D$. This has the advantage of variance reduction while avoiding any resampling overhead, e.g., the Monte Carlo method \citep{paulus2021raoblackwellizing}. 
Experiments show that GST achieves smaller test losses and variances on MNIST-VAE \citep{Jang2017gumbel,kingma2014semi} and better accuracies on ListOps unsupervised parsing \citep{nangia2018listops}. Figure~\ref{fig:illustration} summarizes the difference between our proposed GST and the prior work.

We use bold symbols to denote random variables. For example, $\bm D$, $\bm \xi$, and $\bm G$ are the one-hot random sample, random source, $\text{Gumbel}(0,1)$ random vector, respectively. The subscript $i$, with or without a bracket, denotes the ith entry of a vector; for example, $[p_{\theta}]_i$ and $\bm G_i$ are the ith entry of $p_{\theta}$ and $\bm G$, respectively. Lastly, $\theta$ denotes the trainable parameters of a neural network (NN), and $\text{logit}_{\theta}$ denotes the NN-parameterized vector before the Softmax function.

\section{Gradient Estimation for Discrete DGM}
Let $\bm D$ be a random variable from a discrete/categorical distribution $p_{\theta}$ parameterized by an NN with parameter $\theta$ (we will take $p_{\theta}=\sm_1(\text{logit}_{\theta})$ in \S\ref{sec:stgs}). Without loss of generality, suppose $\bm D$ is one-hot and $\mathbb{P}(\bm D=e_i)=[p_\theta]_i$, where $\{e_1,...,e_N\}$ is the standard basis in $\mathbb{R}^N$. Given an objective function $g:~\{e_1,...,e_N\}\rightarrow \mathbb{R}$, we want to minimize the expected value of $g$ over the distribution of $\bm D$:
\begin{equation}
\min_{p_\theta} \E_{\bm D\sim p_\theta}[g(\bm D)]= \min_\theta \E_{\bm D\sim p_\theta}[g(\bm D)].
\label{eq:main_obj}
\end{equation}
Since $p_\theta$ is parameterized by $\theta$, Eq.~\eqref{eq:main_obj} needs to be optimized by an (unbiased) estimate of $\nabla_\theta \E[g(\bm D)]$, which we will review in this section.

\subsection{REINFORCE Estimator}
The REINFORCE estimator \citep{glynn1990reinforce,williams1992reinforce} is defined as follows:
\begin{equation}
\begin{split}
&\nabla_\theta \E[g(\bm D)]=\sum_{i=1}^N\nabla_\theta [p_\theta]_ig(e_i)\\
&\overset{(*)}{=}\sum_{i=1}^N[p_\theta]_i\frac{\nabla_\theta [p_\theta]_i}{[p_\theta]_i}g(e_i)=\E[\nabla_\theta\log p_\theta(\bm D)g(\bm D)],
\end{split}
\label{eq:reinforce-proof}
\end{equation}
where $p_\theta(\bm D)$ is the probability of $\bm D$; i.e., $p_\theta(\bm D)=[p_\theta]_i$ if $\bm D=i$. 
The LHS of $(*)$ is unweighted while the RHS is written into an expectation with weights $1/[p_\theta]_i$. 
The REINFORCE estimator simply takes Eq.~\eqref{eq:grad-reinforce} as an unbiased estimate of $\nabla_\theta \E[g(\bm D)]$.
\begin{equation}
\nabla_\theta\log p_\theta(\bm D)g(\bm D),~~~\bm D\sim p_\theta.
\label{eq:grad-reinforce}
\end{equation}
While Eq.~\eqref{eq:grad-reinforce} only requires a single sample from $p_\theta$ to establish an unbiased estimate, it is of high variance due to its importance weight $1/[p_\theta]_i$. The variance can be large when there exists small $[p_\theta]_i$ for some $i\in\{1,...,N\}$.

\subsection{Reparameterization and Straight-Through}
The major drawback of REINFORCE is that the randomness ($p_\theta$) is coupled with the NN parameters ($\theta$). This motivates the use of reparameterization to decouple the randomness and the parameter. Suppose that there exists a reparameterization of $\bm D$ (e.g.~Eq.~\eqref{eq:st}) , written as $D(\theta,\bm \xi)$, such that $D(\theta,\bm \xi)\in \{e_1,...,e_N\}$ (i.e., one-hot) and that $D(\theta,\bm \xi)$ is random with the source of randomness from $\bm \xi$. Under such reparameterization, Eq.~\eqref{eq:main_obj} is re-written as
\begin{equation}
\min_\theta \E_{\bm \xi}[g(D(\theta, \bm\xi))].
\label{eq:main_obj_rep}
\end{equation}
The gradient to optimize Eq.~\eqref{eq:main_obj_rep} becomes $\nabla_\theta \E_{\bm \xi}[g(D(\theta,\bm\xi))]$ with $\nabla_\theta g(D(\theta,\bm\xi))$ being its unbiased estimate:
\begin{equation}
\begin{split}
&\nabla_\theta \E_{\bm \xi}[g(D(\theta,\bm\xi))]=\nabla_\theta\int f_{\Xi}(\xi) g(D(\theta,\xi))d\xi\\
&=\int\nabla_\theta f_{\Xi}(\xi) g(D(\theta,\xi))d\xi = \E_{\bm \xi}[\nabla_\theta g(D(\theta,\bm\xi))],
\end{split}
\label{eq:repara-proof}
\end{equation}
where $f_{\Xi}$ is the probability density of $\bm\xi$. Compared with Eq.~\eqref{eq:reinforce-proof}, Eq.~\eqref{eq:repara-proof} does not involve an importance weighting procedure thanks to the decoupling of $\theta$ and $\bm\xi$. Thus, the gradient estimation from the reparameterization (i.e. $D(\theta,\bm\xi)$) is believed to be of lower variance.

The final missing piece is to backpropagate through $D(\theta,\bm \xi)$, which typically requires the differentiability w.r.t. $\theta$. However, the discrete nature of $\bm D$ (i.e., $\bm D \in \{e_1,...,e_N\}$) hinders the differentiability\footnote{We use the concept of~\emph{differentiablility} loosely in this paper. For example, a common misunderstanding is that the $\arg\max$ operation is not differentiable. However, it is differentiable almost everywhere with gradient 0 except for the case when the equality sign holds~\citep{paulus2020gradient}. Nevertheless, we still consider it as~\emph{non-differentiable} in this paper.}.
Fortunately, the Straight-Through estimator \citep{bengio2013st,Chung2017st},~\citep{hinton2012neural}[lecture 15b] helps enforce the discreteness while maintaining differentiability. 
In particular, let $\theta$ be the NN's parameters and $\theta_0=\text{stop\_grad}(\theta)$ be the parameters in the forward pass but with zero gradient in the backward pass (i.e., $\nabla_\theta \text{stop\_grad}(\Phi(\theta))=\nabla_\theta \Phi(\theta_0)=0$ for any $\Phi(\cdot)$). After then, the distribution vector $p_{\theta_0}$ is evaluated, and a one-hot vector $\bm D=D(\theta_0,\bm\xi)$ is sampled from $p_{\theta_0}$. Note $\bm\xi$ is the randomness of the sampling process and the random variable $\bm D$ is now represented by $D(\theta_0,\bm\xi)$. With all the ingredients, the general form of the Straight-Through (ST) estimator can be written as
\begin{equation}
\begin{split}
&D_{\text{ST}}(\theta,\bm\xi)\\ &=\text{stop\_grad}(D(\theta,\bm\xi)) - \text{stop\_grad}(h(\theta,\bm\xi)) + h(\theta,\bm\xi)\\ 
&= D(\theta_0,\bm\xi) - h(\theta_0,\bm\xi) + h(\theta,\bm\xi),
\end{split}
\label{eq:st_general}
\end{equation}
where $h$ is a differentiable function that depends on the NN parameter $\theta$ and (optionally) the randomness $\bm\xi$. Intuitively, $h$ is the surrogate of $\bm D = D(\theta_0,\bm\xi)$ and it allows $D_{\text{ST}}(\theta,\bm\xi)$ to be differentiable. In the forward propagation we have $D_{\text{ST}}(\theta_0,\bm\xi)=\bm D$ while the backward propagation gives $\nabla_\theta D_{\text{ST}}(\theta,\bm\xi)=\nabla_\theta h(\theta,\bm\xi)$. In other words, $D_{\text{ST}}$ has the same value as $\bm D$ and therefore is random and one-hot during the function evaluation but is replaced by $h$ during the gradient computation.
In fact, Eq.~\eqref{eq:st_general} can be viewed as a continuous local approximation around the discrete and non-differentiable $D(\theta_0,\bm\xi)$ by adding a curve $c(\theta)=h(\theta,\bm\xi)-h(\theta_0,\bm\xi)$ such that $c(\theta_0)=0$ and $c(\theta)$ is differentiable w.r.t $\theta$. This builds the local structure around $\theta_0$ and makes the differentiation possible around the neighborhood of $D(\theta_0,\bm\xi)$.

Although Eq.~\eqref{eq:st_general} seems a bit abstract, a naive way to specify $h$ is as follows.
\begin{equation}
D_{\text{ST-naive}}(\theta,\bm\xi) = D(\theta_0,\bm\xi) - p_{\theta_0} + p_{\theta}
\label{eq:st}
\end{equation}
Eq.~\eqref{eq:st} is reduced from Eq.~\eqref{eq:st_general} with $h(\theta,\bm\xi)=p_{\theta}$, where $p_\theta$ is the probability vector modeled by an NN. While being technically feasible, such a naive choice ignores the randomness $\bm\xi$ and is therefore unfavorable. A better choice is to design an $h(\theta,\bm\xi)$ with a strong connection to $\bm D$ considering that the latter is replaced by the former during the gradient computation in Eq.~\eqref{eq:st_general}. We will review a better choice in the next subsection.

\subsection{Straight-Through Gumbel-Softmax (STGS)}
\label{sec:stgs}

Gumbel-Softmax \citep{Jang2017gumbel,Chris2017gumbel} provides a reasonable choice of $h(\theta,\bm\xi)$ that is strongly correlated with $\bm D$ and is differentiable w.r.t the NN parameter $\theta$. The construction is as follows.

Define $\sm_\tau:\mathbb{R}^n\rightarrow\mathbb{R}^n$ as $[\sm_\tau(x)]_i=e^{x_i/\tau}/\sum_{j=1}^ne^{x_j/\tau}$ and let $p_\theta =\sm_1(\text{logit}_\theta)$. That is, the NN parameter $\theta$ generates the unnormalized log probability vector $\text{logit}_\theta$, and $\text{logit}_\theta$ yields the discrete distribution $p_\theta$ through a Softmax function with temperature $\tau=1$. The Gumbel-Max trick \citep{chris2014gumbel} states that
\begin{equation}
\underset{1\leq i\leq N}{\arg\max}~[\text{logit}_\theta+\bm G]_i\sim p_\theta=\sm_1(\text{logit}_\theta),
\label{eq:gumbel-max}
\end{equation}
where $\bm G$ is an i.i.d. $N$-dimensional $\text{Gumbel}(0,1)$ vector. Therefore, we can sample the discrete random variable $\bm D$ using Eq.~\eqref{eq:gumbel-max}. Since the gradient of $\arg\max$ in Eq.~\eqref{eq:gumbel-max} is not useful (either 0 or undefined), Gumbel-Softmax approximates $\arg\max$ of Eq.~\eqref{eq:gumbel-max} using another Softmax:
\begin{equation}
D_{\text{GS}}(\theta,\bm\xi)=\sm_\tau (\text{logit}_\theta+\bm G)
\label{eq:gs}
\end{equation}
After sampling $D(\theta_0,\bm\xi)$ from Eq.~\eqref{eq:gumbel-max} and constructing $h(\theta,\bm\xi)$ from Eq.~\eqref{eq:gs}, we can substitute them into  Eq.~\eqref{eq:st_general} to get the Straight-Through Gumbel-Softmax (STGS) estimator~\citep{Jang2017gumbel}:
\begin{equation}
D_{\text{STGS}}(\theta,\bm\xi)=D(\theta_0,\bm\xi)-D_{\text{GS}}(\theta_0,\bm\xi)+D_{\text{GS}}(\theta,\bm\xi),
\label{eq:stgs}
\end{equation}
where $\bm D=D(\theta_0,\bm\xi)=\text{OneHot}(\underset{1\leq i\leq N}{\arg\max}~[\text{logit}_{\theta_0}+\bm G]_i)$ is the one-hot random sample using the Gumbel-Max in Eq.~\eqref{eq:gumbel-max}. $\bm\xi$ is the randomness from the Gumbel vector $\bm G$. 

Note that Eq.~\eqref{eq:stgs} is reduced from Eq.~\eqref{eq:st_general} with $h(\theta,\bm\xi)=D_{\text{GS}}(\theta,\bm\xi)$. This allows $\bm D$ and $h(\theta,\bm\xi)$ to share the same randomness hence establishes a strong correlation, making Straight-Through Gumbel-Softmax Eq.~\eqref{eq:stgs} preferable to the naive choice in  Eq.~\eqref{eq:st}.

\subsection{Conditional Perspective on STGS}
\label{sec:rao}
Although STGS successfully chooses an $h$ that has a strong correlation with $\bm D$, there is still room for improvement. The key observation is that $\bm D = D(\theta_0,\bm\xi)$ in Eq.~\eqref{eq:stgs} is ``much less random'' than its random source $\bm\xi$. That is, for a fixed discrete value $e_i$, there are multiple instances of $\bm\xi$ that lead to the same $D(\theta_0,\bm\xi)=e_i$. 
From a variance reduction perspective, it is tempting to set $h(\theta,\bm\xi)=h(\theta,D(\theta_0,\bm\xi))=h(\theta,\bm D)$.
In other words, it is enough to make the randomness of $h$ solely come from $\bm D$ if there is a strong correlation between $h(\theta, \cdot)$ and $\bm D$.

Such a variance reduction by conditioning is proposed in \citet{paulus2021raoblackwellizing}. The authors use an averaging over conditional distribution to make the randomness of $h$ almost come from $\bm D$. To be more specific, we can first sample a $\bm D=D(\theta_0,\bm\xi) \sim p_{\theta_0}$ and then sample $\bm J_i(\theta)\overset{\text{i.i.d.}}{\sim}\text{logit}_\theta + \bm G|\bm D$. Finally, a rao-blackwellization scheme can be constructed as follows.
{\small
\begin{equation}
\begin{split}
&D_{\text{GR-MCK}}(\theta,\bm\xi)=\\
&\bm D +\frac{1}{K}\sum_{i=1}^K \left[-\sm_\tau(\bm J_i(\theta_0)) + \sm_\tau(\bm J_i(\theta))\right]\\
&\overset{K\rightarrow\infty}{\rightarrow} \bm D- \E [D_{\text{GS}}(\theta_0,\bm\xi)|\bm D ] + \E [D_{\text{GS}}(\theta,\bm\xi)|\bm D ].
\end{split}
\label{eq:rao}
\end{equation}
}
When $K=1$, Eq.~\eqref{eq:rao} is identical to STGS. Therefore, Eq.~\eqref{eq:rao} implies another construction of STGS: first sample $\bm D\sim p_{\theta_0}$ and then $\text{logit}_\theta + \bm G|\bm D$. Such a two-step process motivates the design of our GST estimator in \S\ref{sec:prop}. On the other hand, when $K$ is large, Eq.~\eqref{eq:rao} converges to the conditional expectation. $\E [D_{\text{GS}}(\theta,\bm\xi)|\bm D ]$'s randomness only depends on $\bm D=D(\theta_0,\bm\xi)$, not $\bm\xi$. Hence, Eq.~\eqref{eq:rao} reduces the randomness through conditioning and averaging.

\section{Key Properties of STGS}
\label{sec:property}
Given the widespread success of STGS, we want to identify its good properties and use them to motivate our estimator design in \S\ref{sec:prop}. An ablation study is conducted in \S\ref{sec:ablation} to verify the usefulness of these properties.

\setcounter{subsection}{-1}
\subsection{Property 0: Following $p_{\theta_0}$}
We want to stress the most basic property of STGS: STGS follows $p_{\theta_0}=\sm_1(\text{logit}_{\theta_0})$ during the forward propagation of a NN. This is because Eq.~\eqref{eq:stgs} reduces to $\bm D$ during the forward pass (i.e. $\theta=\theta_0$), and $\bm D$ is sampled by Gumbel-Max, yielding $\bm D\sim p_{\theta_0}$. As shown in Eq.~\eqref{eq:st_general}, any discrete DGM based on the Straight-Through estimator satisfies this property. Since we focus on the family of STGS, we assume this property holds throughout this paper.

\subsection{Property 1: Consistency}
\label{sec:consistency}
Recall that in Eq.~\eqref{eq:stgs}, $\bm D=\text{OneHot}(\underset{1\leq i\leq N}{\arg\max}~[\text{logit}_{\theta_0}+G]_i)$ is the one-hot sample using the Gumbel-Max trick and $D_{\text{GS}}(\theta_0,\bm\xi)=\sm_\tau (\text{logit}_{\theta_0}+\bm G)$ is its surrogate. Because the $\sm$ operation does not change the relative order of the input vector components, we have: $\underset{1\leq i\leq N}{\arg\max} [\bm D]_i = \underset{1\leq i\leq N}{\arg\max} [D_{\text{GS}}(\theta_0,\bm\xi)]_i$. Thereby, we say a differentiable surrogate function $h$ is consistent with $\bm D$ in Eq.~\eqref{eq:st_general} if:
\begin{equation}
\underset{1\leq i\leq N}{\arg\max} [h(\theta_0,\bm\xi)]_i=\underset{1\leq i\leq N}{\arg\max} [\bm D]_i.
\label{eq:consistency}
\end{equation}
This makes intuitive sense as a surrogate should at least keep the supremacy of the largest component of the input.

\subsection{Property 2: Zero-Gradient Perturbation}
\label{sec:property2}
We now discuss the functional form of the differentiable surrogate, $h(\theta,\bm\xi)$. One observation is that $h$ should sit in the probability simplex $\Delta_{N-1}=\{(u_1,...,u_N):\sum_{i=1}^N u_i=1,~u_i\geq 0~\forall~i\in[1,...,N]\}$, which can be achieved by applying a Softmax or Sparsemax \citep{martins16sparsemax}. This is because $h$ mimics $\bm D$ and $\bm D$ is on the vertices of  $\Delta_{N-1}$. Since we are more compatible with Softmax, we require $h=\sm_\tau(\text{some~logit})$. Then, the design of $h$ boils down to the logit design. Gumbel-Softmax chooses its logit as a perturbed one: $\text{logit}_{\theta}+\bm G$. More generally, we introduce a perturbation function $m$ and write $h$ as 
\begin{equation}
h(\theta,\bm\xi) = \sm_\tau(\text{logit}_{\theta} + m(\theta_0, \bm\xi)).
\label{eq:nondiff}
\end{equation}

Note that $m(\theta_0,\bm \xi)=\bm G$ for Gumbel-Softmax, so $m$ is a perturbation function that generalizes the Gumbel vector. To see why $m$ is designed to be a function of $\theta_0$ and $\bm\xi$, we first note that $m$ has to depend on the sampling randomness $\bm\xi$ so that the surrogate $h$ correlates well with the random one-hot sample $\bm D$. Secondly, the perturbation may depend on $\text{logit}_\theta$, so the dependency on $\theta$ should also be included. Furthermore, $\theta$ is replaced by its zero-gradient version, $\theta_0$, in order to maintain property 1 after a small gradient descent step in $\theta$.\footnote{Let $h'=\sm_\tau(\text{logit}_{\theta} + m(\theta, \bm\xi)).$ After a gradient step, $m(\theta,\bm\xi)$ is changed but $m(\theta_0,\bm \xi)$ is not, so $h$ is more likely to be consistent with $\bm D$ than $h'$ is.} 

Eq.~\eqref{eq:nondiff} brings up the idea of \emph{perturbed logits}. For a zero-gradient perturbation $m(\theta_0,\bm\xi)$, we call $\text{logit}_{\theta}+m(\theta_0,\bm \xi)$ the perturbed logits of $\text{logit}_{\theta}$ by $m$, or simply the perturbed logits. We will discuss the property of perturbed logits in the next subsection, which will be useful for our GST estimator proposed in \S\ref{sec:prop}.

\subsection{Property 3: Strict Gap Between Perturbed Logits}
\label{sec:strict_gap}
In~\S\ref{sec:consistency}, we  focused only on the largest logit, but this leaves us wonder the actual difference between the largest and the other logits. The difference is important since we want to apply the perturbation correctly to reflect the logit difference. To simplify the problem, we study the difference between the top-2 largest perturbed logits of STGS. In particular, conditioning on the event that $i$ is the index of the largest perturbed logit; i.e.,
\begin{equation}
    \bm D=\text{OneHot}(\underset{1\leq j\leq N}{\arg\max}[\text{logit}_{\theta_0}]_j+\bm G_j)=e_i, \nonumber
\end{equation} the expected gap between the top-2 largest perturbed logits is defined as
{\small
\begin{equation}
\begin{split}
&\text{Gap}(\theta_0|\bm D=e_i)= \\
& \E\left[[\text{logit}_{\theta_0}]_i+\bm G_i -\underset{j\neq i}{\max}~  ([\text{logit}_{\theta_0}]_j+\bm G_j)\Big|\bm D=e_i\right].
\end{split}
\label{eq:gap}
\end{equation}
}
Lemma~\ref{lemma:gap} gives an analytical expression of Eq.~\eqref{eq:gap}.
\begin{lemma}
	$\text{Gap}(\theta_0|\bm D=e_i)=-\frac{\log(1-[p_{\theta_0}]_i)}{[p_{\theta_0}]_i}=\frac{\log\left(1+e^{\ell_i-s}\right)}{1-1/(1+e^{\ell_i-s})}$,
	where $\ell_i$ is the shorthand of $[\text{logit}_{\theta_0}]_i$ and $s=\log\left(\sum_{j\neq i} e^{\ell_j} \right)$ is the log-sum-exponential of the unselected logits.
	\label{lemma:gap}
\end{lemma}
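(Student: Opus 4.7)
The plan is to recast the conditional expectation as a tail integral involving two independent Gumbel random variables. Let $\ell_i=[\text{logit}_{\theta_0}]_i$ and $s=\log\sum_{j\neq i}e^{\ell_j}$ as in the statement, and define
\[ X := \ell_i + \bm G_i, \qquad Y := \max_{j\neq i}\bigl(\ell_j + \bm G_j\bigr). \]
Since $\bm G_i$ is independent of $\{\bm G_j\}_{j\neq i}$, $X$ and $Y$ are independent. The shift property of Gumbel gives $X\sim\text{Gumbel}(\ell_i,1)$, and applying Eq.~\eqref{eq:gumbel-max} to the sub-logits $\{\ell_j\}_{j\neq i}$ shows that $Y\sim\text{Gumbel}(s,1)$. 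The conditioning event $\{\bm D=e_i\}$ is precisely $\{X>Y\}$, so the gap to evaluate is $\E[X-Y\mid X>Y]$.

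I would then split this as $\E[X-Y\mid X>Y]=\E[(X-Y)^+]/\mathbb{P}(X>Y)$. The denominator follows directly from Eq.~\eqref{eq:gumbel-max}: $\mathbb{P}(X>Y)=[p_{\theta_0}]_i$. For the numerator, a short change-of-variables argument (compute $\mathbb{P}(X-Y\leq t)=\E[F_X(Y+t)]$ and reduce to the Laplace transform of an $\text{Exp}(1)$ variable via $U=e^{-(Y-s)}$) shows that $T:=X-Y$ is logistic with location $\mu=\ell_i-s$ and unit scale, hence $\mathbb{P}(T>t)=1/(1+e^{t-\mu})$. The tail-integral identity then gives
\[ \E[(X-Y)^+]=\int_0^\infty \frac{dt}{1+e^{t-\mu}}. \]

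This integral is elementary: substituting $v=e^{t-\mu}$ and using the partial fraction $\tfrac{1}{v(1+v)}=\tfrac{1}{v}-\tfrac{1}{1+v}$ yields $\log(1+e^\mu)=\log(1+e^{\ell_i-s})$. Combining pieces,
\[ \text{Gap}(\theta_0\mid\bm D=e_i)=\frac{\log(1+e^{\ell_i-s})}{[p_{\theta_0}]_i}. \]
The remaining algebra is bookkeeping: $1-[p_{\theta_0}]_i=1/(1+e^{\ell_i-s})$ gives $\log(1+e^{\ell_i-s})=-\log(1-[p_{\theta_0}]_i)$, recovering the first form in the lemma, while $[p_{\theta_0}]_i=1-1/(1+e^{\ell_i-s})$ recovers the second form. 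The one nontrivial step is identifying $T$ as logistic, i.e.\ obtaining the closed form of $\mathbb{P}(X-Y>t)$; I expect this to be the main technical hurdle, but it is a standard Gumbel calculation that mirrors the independence-of-argmax argument already used to motivate STGS.
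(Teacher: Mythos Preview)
Your proposal is correct. It is, in fact, essentially the argument the paper gives in its ``alternative view'' section (Appendix~\ref{appendix:view_to_gap}): reduce $\max_{j\neq i}(\ell_j+\bm G_j)$ to a single Gumbel with location $s$, observe that the difference of two independent unit-scale Gumbels is $\text{Logistic}(\ell_i-s,1)$, and integrate. Your tail-integral formulation $\E[(X-Y)^+]=\int_0^\infty\mathbb{P}(T>t)\,dt$ is a bit slicker than the paper's direct density integration there, but the idea is the same.

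The paper's \emph{primary} proof, however, takes a different route: it invokes the explicit conditional law of $\text{logit}+\bm G\mid\bm D=e_i$ (the representation due to \citet{paulus2021raoblackwellizing}) to write the gap as $\E\bigl[\log\bigl(1+\tfrac{\bm E_0}{\bm E_i(1-[p_{\theta_0}]_i)}\bigr)\bigr]$ with $\bm E_0,\bm E_i\overset{\text{i.i.d.}}{\sim}\text{Exp}(1)$, and then evaluates this by two nested integrals involving the exponential integral $\Gamma(0,r)$ and its series expansion. Your approach bypasses all of that machinery by recognizing the logistic structure up front, which is both shorter and more transparent; the paper's primary proof buys nothing extra here, and indeed the authors themselves present the logistic argument as the more intuitive one.
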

A closer look into Lemma~\ref{lemma:gap} shows that the expected gap largely depends on the logit difference $\ell_i-s$ and $s$ is interpreted as \emph{the effective unselected logit}. It is immediate to see that 
\textbf{(a)} the gap increases in $\ell_i-s$,  \textbf{(b)} the gap converges to 1 when $\ell_i-s\ll 0$, and \textbf{(c)} the gap converges to $\ell_i-s$ as $\ell_i-s\gg 0$.

\textbf{(a)} follows from that larger $\ell_i-s$ implies larger logit difference and gap. \textbf{(b)} means the expected gap is strict and is lower-bounded by 1. \textbf{(c)} follows from that the Gumbel noises are negligible when $\ell_i-s$ is large, so the expected gap converges to the effective unperturbed logit difference.

\textbf{(b)} is probably the most counter-intuitive but also the most important property. Although the intuition might suggest a vanishing gap when $\ell_i-s \ll 0$, the event $\bm D=e_i$ turns out to put weight on large enough random gaps such that the expected gap $\geq 1$. 
We provide more discussion on this in Appendix~\ref{appendix:view_to_gap}. The strict expected gap is important for the $\tau$-tempered Gumbel-Softmax, Eq.~\eqref{eq:gs}, to converge to $\bm D$ easier at low temperatures. That is, a strict gap implies $D_{\text{GS}}(\theta_0,\bm\xi)=\sm_\tau(\text{logit}_{\theta_0} +\bm G)\overset{\tau\rightarrow 0}{\rightarrow} D(\theta_0,\bm\xi)=\bm D$ and justifies $D_{\text{GS}}$ as a differentiable surrogate of $\bm D$.

\section{The Proposed Method}
\label{sec:prop}
\subsection{Near-deterministic Straight-Through Estimator}
As discussed in \S\ref{sec:rao}, \citet{paulus2021raoblackwellizing} propose conditional averaging for variance reduction. This is not very efficient in large-scale modeling as the averaging causes an extra computation of size $O(K)$ with $K$ typically being 100 or higher. 
Nevertheless, in Eq.~\eqref{eq:rao}, the average converges to $\E [D_{\text{GS}}(\theta,\bm\xi)|\bm D ]$ at large $K$, which is a deterministic function in $\bm D$. This implies a good choice of deterministic function in $\bm D$, written as $h(\theta,\bm D)$, might improve the performance. 
We thus reduce the general Straight-Through, Eq.~\eqref{eq:st_general} to the following near-deterministic Straight-Through estimator.
\begin{equation}
	D_{\text{ST-det}}(\theta,\bm D) = \bm D - h(\theta_0,\bm D) + h(\theta,\bm D).
	\label{eq:st_det}
\end{equation}
$\bm D=D(\theta_0,\bm\xi)$ is generated from the random source $\bm \xi$. We see that Eq.~\eqref{eq:rao} with $K\rightarrow\infty$ becomes a special case of Eq.~\eqref{eq:st_det} with $h(\theta,\bm D)=\E [D_{\text{GS}}(\theta,\bm\xi)|\bm D ]$. In the next subsection, we will find another $h(\theta,\bm D)$ that also has a high correlation with $\bm D$ but without high resampling cost.

\subsection{Gapped Straight-Through Estimator}
\label{sec:gst}

\begin{figure*}[ht]
\centering
\includegraphics[width=\textwidth]{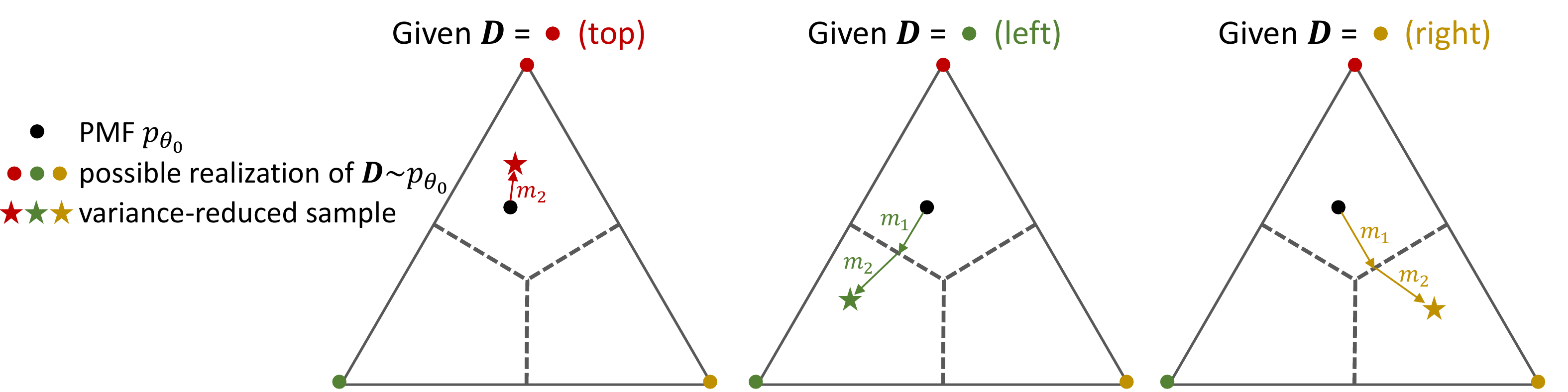}
\caption{\textbf{The soft samples GST estimator on the probability simplex.} The dashed line separate three categories (\textcolor{darkred}{top}, \textcolor{darkgreen}{left}, \textcolor{oldgold}{right}). First, GST chooses a random category $\bm D\sim p_{\theta_0}$, arriving at either the \textcolor{darkred}{red}, \textcolor{darkgreen}{green} or \textcolor{oldgold}{yellow} point. Then, starting from $p_{\theta_0}$, apply $m_1$ and $m_2$ to get a variance-reduced soft sample given $\bm D$. $m_1$ pushes the point to the boundary (if $\arg\max_j~[p_{\theta_0}]_j \neq \arg\max_j~ [\bm D]_j$), and $m_2$ encourages a strict gap between the $\bm D$-selected logit and the unselected ones so that $\bm D$'s category stands out.}
\label{fig:illustration_gst}
\end{figure*}

We now present our Gapped Straight-through Estimator (GST). Due to the success of variance-reduction-type estimator \citep{paulus2021raoblackwellizing}, it is enough to find a good deterministic function, $h(\theta,\bm D)$, for the near-deterministic Straight-Through estimator, Eq.~\eqref{eq:st_det}. Motivated by the success of Gumbel-Softmax, we choose $h(\theta,\bm D)$ based on the properties discussed in \S\ref{sec:property}.

According to property 1 \& 2, we require $h(\theta,\bm D)$ to satisfy
\begin{align*}
&\underset{1\leq j\leq N}{\arg\max}~ [h(\theta,\bm D)]_j=\underset{1\leq j\leq N}{\arg\max}~[\bm D]_j \\
&h(\theta, \bm D) = \sm_\tau(\text{logit}_\theta + m(\theta_0, \bm D)).
\end{align*}
Since Softmax does not change the relative order of the input components, this can be reduced to 
\begin{equation}
\underset{1\leq j\leq N}{\arg\max}~[\text{logit}_{\theta} + m(\theta_0, \bm D)]_j = \underset{1\leq j\leq N}{\arg\max}~[\bm D]_j.
\label{eq:p1+p2}
\end{equation}
To realize Eq.~\eqref{eq:p1+p2}, we design the perturbation $m_1$ that pushes the $\bm D$-selected logit $\langle\text{logit}_{\theta_0},\bm D\rangle$ to be the same as the largest logit. Note $\langle\cdot,\cdot\rangle$ denotes the inner product. Then, Eq.~\eqref{eq:p1+p2} is satisfied when $m(\theta_0,\bm D)=m_1(\theta_0,\bm D)$:
\begin{equation}
\begin{split}
&m_1(\theta_0,\bm D) = \\
&\left(\underset{1\leq j \leq N}{\max}~[\text{logit}_{\theta_0}]_j - \langle \text{logit}_{\theta_0},\bm D \rangle\right)\cdot \bm D.
\end{split}
\label{eq:m1}
\end{equation}

Although a combination of Eq.~\eqref{eq:nondiff} and Eq.~\eqref{eq:m1} realizes properties 1 \& 2, it cannot guarantee a strict gap between the top-2 perturbed logits (i.e., when $\bm D$ does not select the largest logit, $m_1$ makes the top-2 perturbed logits the same, and the gap size becomes zero). To enforce property 3's strict gap, we may either increase the selected logit or decrease the unselected ones. While both are mathematically correct, experiments show the latter gives stable results.

To enforce a strict gap (property 3 of \S\ref{sec:property}), we choose to make the unselected logits smaller. Let $[m_2]_k$ be the decrease on the unselected logit at index $k$ such that the gap size is at least $g\geq 0$. Then, for all $k$ in the indices of unselected logits, we require the following (Note that the selected logit's value becomes $\underset{1\leq j \leq N}{\max}~[\text{logit}_{\theta_0}]_j$ after applying $m_1$.)
$$\underset{1\leq j \leq N}{\max}~[\text{logit}_{\theta_0}]_j-([\text{logit}_{\theta_0}]_k-[m_2]_k)\geq g.$$
To enhance the sparsity, we also require $[m_2]_k\geq0$ where $[m_2]_k=0$ means the kth logit doesn't need the decrease. This avoids the unnecessary perturbation on the logit and makes it simpler to construct $m_2$. $m_2(\theta_0,\bm D,g)$ is defined as follows.
\begin{equation}
\begin{split}
&m_2(\theta_0,\bm D,g)=\\
&\left(\text{logit}_{\theta_0}+g -\underset{1\leq j \leq N}{\max}~[\text{logit}_{\theta_0}]_j\right)_+\cdot (1- \bm D),
\end{split}
\label{eq:m2}
\end{equation}
where $(x)_+=\max(x,0)$ and the product with $(1- \bm D)$ is element-wise (Hadamard product). Combining $m_1$ and $m_2$, we propose the Gapped Straight-Through estimator as
{\small
\begin{equation}
\begin{split}
&D_{\text{GST}}(\theta,\bm D) = \bm D- h(\theta_0,\bm D) + h(\theta, \bm D)\\
&h(\theta, \bm D) = \sm_\tau(\text{logit}_{\theta}+m_1(\theta_0,\bm D) -m_2(\theta_0,\bm D,g) ).
\end{split}
\label{eq:gst}
\end{equation}
}
One might expect $g\approx 1$ due to the limiting behavior of Gumbel-Softmax's expected gap discussed in \S\ref{sec:property}. Furthermore, according to Lemma~\ref{lemma:gap}, we may set the gap as $g=-\frac{\log(1-[p_{\theta_0}]_i)}{[p_{\theta_0}]_i}$ for $\bm D=e_i$. The experiments in \S\ref{sec:experi} show that both produce very similar results.

\subsection{Connection To~\citet{paulus2021raoblackwellizing}}
Let $\E[X], \V[X]$ be the mean and variance of $X$. By the chain rule and the law of total variance, the variances of gradients under GR-MCK and our proposed GST are
\begin{small}
\begin{align*}
	&\V[\nabla_{\text{GR-MCK}}]=\E[\V[\nabla_{\text{GR-MCK}}|\bm D]] + \V[\E[\nabla_{\text{GR-MCK}}|\bm D]]\\
	&~~~~~~~~~~~~~=\underbrace{\frac{1}{K}\E\Big[ \Big(\frac{\partial g(\bm D)}{\partial \bm D}\Big)^2\V\Big[\frac{\partial D_{GS}(\theta,\bm\xi)}{\partial \theta}\Big|\bm D\Big]\Big]}_{(a)}  \\
	&\quad\quad\quad\quad\quad\quad\quad\quad\quad\quad\quad+ \underbrace{\V\Big[\frac{\partial g(\bm D)}{\partial\bm D} \E\Big[\frac{\partial D_{GS}(\theta,\bm\xi)}{\partial \theta}\Big|\bm D\Big]  \Big]}_{(b)}.\\
	&\V[\nabla_{\text{GST}}]= \V\Big[\frac{\partial g(\bm D)}{\partial\bm D} \frac{\partial D_{\text{GST}}(\theta,\bm D)}{\partial \theta}  \Big].
\end{align*}
\end{small}
When $K=1$, $\V[\nabla_{\text{GR-MCK}}]$ becomes $\V[\nabla_{\text{STGS}}]$. Therefore, GR-MCK reduces the variance of STGS by minimizing term (a), which decreases in $K$. On the other hand, $\V[\nabla_{\text{GST}}]$ is very similar to term (b). Since GST follows the key properties of STGS, we expect $\E\Big[\frac{\partial D_{GS}(\theta,\bm\xi)}{\partial \theta}\Big|\bm D\Big]\approx \frac{\partial D_{\text{GST}}(\theta,\bm D)}{\partial \theta}$, implying that GST may enjoy similar variance reduction as GR-MCK. This is verified through experiments in \S\ref{sec:experi}.

\begin{algorithm}
    \SetKwInOut{Input}{Input}
    \SetKwInOut{Output}{Output}
    \SetKwComment{Comment}{/* }{ */}
    \Input{NN para. $\theta$, temperature $\tau$, mode~$\in$\{hard,soft\}}
    \Output{$D_\text{GST}(\theta, \bm D)$}
    $\bm D$ = sample\_onehot\_from($p_{\theta_0}$); $p_{\theta_0}=\sm_1(\text{logit}_{\theta_0})$.
    
    $m_1$ = Eq.~\eqref{eq:m1}\Comment*{consistency}
    $m_2$ = Eq.~\eqref{eq:m2}\Comment*{strict gap}
    
    $h(\theta, \bm D)$ = $\sm_\tau(\text{logit}_{\theta}+m_1-m_2)$ by Eq.~\eqref{eq:gst};

    \eIf{mode is hard}
      {
      return $\bm D$ - stop\_gradient($h(\theta, \bm D)$) + $h(\theta, \bm D)$ \Comment*{Straight-Through trick for hard samples}
      }
      {
        return $h(\theta, \bm D)$ \Comment*{soft samples}
      }
    \caption{The Proposed GST Estimator}
    \label{alg:gst}
\end{algorithm}

\section{Experiments}
\label{sec:experi}
In this section, we compare our Gapped Straight-Through (GST) estimator with the Straight-Through-based model such as STGS \citep{Jang2017gumbel} and its Monte Carlo variance reduction variant, GR-MC100 \citep{paulus2021raoblackwellizing}. Note MC100 means taking 100 independent samples during the Monte Carlo sampling. All models generate~\emph{hard samples} using the Straight-Through trick. \emph{Soft samples} are the outputs of their differentiable surrogate functions, e.g. Eq.~\eqref{eq:gs} and $h(\theta,\bm D)$ in Eq.~\eqref{eq:gst}.

We evaluate the models on two standard tasks, MNIST-VAE \citep{Jang2017gumbel,kingma2014semi} and ListOps unsupervised parsing \citep{nangia2018listops}. MNIST-VAE takes each digit of an MNIST image as a random variable and trains a Variational Auto Encoder \citep{kingma2019introduction} to generate images. The latent space is represented by 30 categorical variables, each with 10 categories. i.e., 30 one-hot vectors with a total dimension of 30x10. ListOps is a dataset composed of prefix arithmetic expressions such as $\min(3, \max(5, 6))$, and the objective is to predict their evaluation results. The task is typically addressed by a tree-LSTM \citep{tai15treelstm} that learns a distribution over latent parse trees. The categorical random variable models the distribution of the parent node over the plausible candidates \citep{choi2018learning}. Since the latent space of MNIST-VAE accepts either discrete or continuous representations while ListOps' parent node selection is strictly discrete, the \emph{soft sample} option is only allowed in MNIST-VAE and is forbidden in ListOps.\footnote{We release our code for both tasks at: 
\url{https://github.com/chijames/GST}.}

\subsection{MNIST-VAE}
A variational autoencoder that generates the MNIST images can be trained by maximizing the evidence lower bound (ELBO) on the log-likelihood:
{\small
\begin{equation*}
\begin{split}
&\log L_{\theta,\phi}(x)\\
&\geq \E_{\bm D^{1:30}\sim p_{\theta}(\cdot|x)}\left[\log \frac{L_{\phi}(x|\bm D^{1:30})\text{Pri}(\bm D^{1:30})}{p_{\theta}(\bm D^{1:30}|x)} \right]\\
&= \E_{\bm D^{1:30}\sim p_{\theta}(\cdot|x)} \log L_{\phi}(x|\bm D^{1:30})-KL(p_{\theta}(\cdot|x)\|\text{Pri}(\cdot))\\
&=\text{ELBO}_{\theta,\phi}(x).
\end{split}
\end{equation*}
}
$x$ is an input image. $\bm D^{1:30}=[\bm D^1,...,\bm D^{30}]$ is the concatenation of 30 categorical random variables, with each $\bm D^i$ having 10 categories. $p_{\theta}(D^{1:30}|x)$ is the encoder as well as the probability mass of $\bm D^{1:30}$ conditioning on the input image. $L_{\phi}(x|\bm D^{1:30})$ is the decoder that model likelihood of $x$ given $\bm D^{1:30}$. $\text{Pri}(D^{1:30})=1/10^{30}$ is the prior of $\bm D^{1:30}$ and is chosen as the uniform distribution. We train the VAE with the loss function being the negative of ELBO: $$-\E_{x\sim \mathcal{X}_{\text{train}}} \text{ELBO}_{\theta,\phi}(x),~~~\mathcal{X}_{\text{train}}:~\text{training~data}.$$
and test the model by replacing $\mathcal{X}_{\text{train}}$ with the testing data $\mathcal{X}_{\text{test}}$. For simplicity, we train all tasks using the same neural network structure, batch size (=100), epochs (=40), optimizer (Adam, learning rate=0.001) and seeds ($\in$[0,1,...,9]). The only differences are the models (STGS, GR-MC100, GST) and temperatures $\tau\in [1,0.5,0.1]$.

\subsubsection{Ablation Study}
\label{sec:ablation}
To start with, we conduct an ablation study of the properties in \S\ref{sec:property}. We compare five estimators from which only one (GST-1.0) satisfies all properties. Because GST-1.0 performs much better than the others, the study justifies that a reasonable estimator should satisfy all properties at the same time.

Table~\ref{tbl:ablation_settings} summarizes the estimators and their properties. ST is Eq.~\eqref{eq:st}, which lacks the consistency because $\arg\max_i [p_{\theta_0}]_i$ may not be $\arg\max_i \bm D_i$. Gap-0.0 is Eq.~\eqref{eq:gst} with $g=0$ and thus the gap is zero by construction. NZ means "non-zero gradient for $m$"; i.e. $m=m(\theta,\bm \xi)$ and hence $\nabla_{\theta}m\neq 0$.

Table~\ref{tbl:ablation} suggests that as long as an estimator violates any of the properties, its performance will worsen. Note that the best performing one, GST-1.0, satisfies all three properties.

\begin{table}[!ht]
    \centering
    \begin{tabular}{l|lll}
    \hline\hline
    Estimator &  Consistency & $\nabla_\theta m=0$ & $\text{Gap}>0$\\
    \hline
    ST & x & \checkmark & \checkmark\\
    NZ-GST-0.0 & \checkmark & x & x \\
    NZ-GST-1.0 & \checkmark & x & \checkmark \\
    GST-0.0 & \checkmark & \checkmark & x\\
    GST-1.0 & \checkmark & \checkmark & \checkmark \\ 
    \hline\hline
    \end{tabular}
    \caption{\textbf{Estimators and properties 1, 2 \& 3 of section~\ref{sec:property}.}}
    \label{tbl:ablation_settings}
\end{table}


\begin{table}[!ht]
    \centering
    \begin{tabular}{ll|ll}
    \hline\hline
    Estimator & Temperature & Neg. ELBO &  Std.\\
    \hline
    \multirow{3}{*}{ST}
    &1.0 & 123.35 & 0.54\\
    &0.5 & 133.84 & 0.66\\
    \hline
    \multirow{3}{*}{NZ-GST-0.0}
    &1.0 & 128.45 & 0.53\\
    &0.5 & 139.02 & 0.47\\
    \hline
    \multirow{3}{*}{NZ-GST-1.0}
    &1.0 & 205.88 & 0.04\\
    &0.5 & 205.90& 0.05\\
    \hline
    \multirow{3}{*}{GST-0.0}
    &0.5 & 119.10 & 0.29\\
    &1.0 & 115.48 & 0.81\\
    \hline
    \multirow{3}{*}{GST-1.0}
    &1.0 & 113.63 & 1.48\\
    &0.5 & 108.43& 1.08\\
    \hline\hline
    \end{tabular}
    \caption{\textbf{Ablation study on the dev set.} The smaller (average) negative ELBO the better. Std. is the standard deviation.}
    \label{tbl:ablation}
\end{table}

\subsubsection{Comparisons of Different Estimators}
\label{sec:vae-comparison}
We now turn to the comparison of STGS, GR-MC100, and GST. Each of these estimators satisfies the properties in \S\ref{sec:property} and should perform reasonably well. The metrics we focus on are the negative ELBO and the standard deviation.


We can see that the GST estimators outperform all the other estimators at temperatures 1.0 and 0.5. Nevertheless, our GST estimator does not perform well under the low-temperature settings, which surprised us at first. After a closer inspection, we conclude that absolute temperature scales for different estimators are not directly comparable. 
Concretely, we calculate the entropy of the probability distribution calculated in Eq. (\ref{eq:gst}). For the first 100 steps at temperature 0.1, the entropy of GR-MC100, STGS, and GST-1.2 are 0.36, 0.15, 0.0007, respectively. The effect is detrimental to GST-1.2 as it does not have enough exploration during the early stage of training. 
The inferior performance of STGS and GR-MC100 at temperature=0.01 also corroborates the effect. Still, this does not undermine the superior quality of the GST estimator, as we can fix the temperature at 1.0 or 0.5 and outperform the previous state-of-the-art GR-MC100 at its best setting. For interested readers, we offer a mixed temperature training strategy that can stabilize the training process of GST at low temperatures in Appendix~\ref{appendix:more_vae_experi}.

Finally, we notice that GST-pi may not perform better than GST-1.0 or GST-1.2 even though GST-pi follows Lemma~\ref{lemma:gap}: Eq.~\eqref{eq:gst} with the expected gap of STGS, $g=-\frac{\log(1-[p_{\theta_0}]_i)}{[p_{\theta_0}]_i}$. Recall that the expected gap converges to 1 when $\ell_i-s \ll 0$ or equivalently, $[p_{\theta_0}]_i\rightarrow 0$. Hence, the limiting behavior of the expected gap ($g=1$) might be more useful than the exactly expected gap at a certain $[p_{\theta_0}]_i$. It is enough to choose $g\approx 1$ for the MNIST-VAE task.


\begin{table}[!ht]
    \centering
    \begin{tabular}{cl|cc}
    \hline\hline
    Temperature & Estimator & Neg. ELBO &  Std.\\
    \hline
    \multirow{6}{*}{1.0} & STGS & 122.96 & 3.08\\
    &GR-MC100 & 120.65 & 2.95\\
    &GST-0.8 & 113.61 & 1.96\\
    &GST-1.0 & 113.63 & 1.48\\
    &GST-1.2 & \bf 112.58 & \bf 1.11\\
    &GST-pi & 112.72 & 1.63\\
    \hline
    \multirow{6}{*}{0.5} & STGS & 118.96 & 2.51\\
    &GR-MC100 & 117.88 & 3.01\\
    &GST-0.8 & 111.54 & 1.30\\
    &GST-1.0 & 108.43 & 1.08\\
    &GST-1.2 & \bf 107.33 & \bf 0.69\\
    &GST-pi & 109.42 & 1.32\\
    \hline
    \multirow{3}{*}{0.1} & STGS & 127.70 & 3.99\\
    &GR-MC100 & 123.23 & 3.55\\
    &GST-* & X &X\\
    \hline
    \multirow{3}{*}{0.01} & STGS & 141.61 & 3.53\\
    &GR-MC100 & 130.04 & 3.63\\
    &GST-* & X &X\\
    \hline\hline
    \end{tabular}
    \caption{\textbf{Comparison of estimators on MNIST-VAE.} The results are evaluated over ten different random seeds. GST-* denotes the GST-based estimators. X means the estimator cannot converge.}
    \label{tbl:vae-compare}
\end{table}

\subsubsection{Empirical Variance Comparison}
We evaluate the variance of the gradient estimators once at the end of each epoch on the MNIST-VAE task. Figure~\ref{fig:var} shows our GST estimator achieves lower variances compared with STGS and GR-MC100 throughout the entire training process.
\begin{figure}[!ht]
    \centering
    \includegraphics[width=0.4\textwidth]{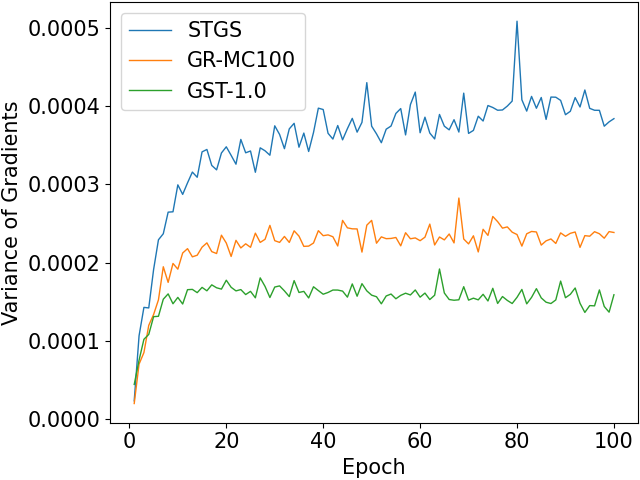}
    \vspace{-3mm}
    \caption{Variance Comparison on MNIST-VAE.}
    \label{fig:var}
\end{figure}

\subsubsection{Comparisons with REINFORCE-based estimators}
To better position our work in the literature of discrete gradient estimation techniques, we additionally compare our estimators, GST-1.0 and GST-1.2, with several state-of-the-art REINFORCE-based strong baselines~\cite{disarm,NEURIPS2021_cd0b43ea}. In Table~\ref{tbl:vae-compare-couple}, we closely follow the network architecture (e.g., sizes, activations) and all the hyperparameters (e.g., batch size, epochs) by carefully inspecting the released code at \url{https://github.com/google-research/google-research/tree/master/disarm}. We can see that our estimator outperforms all the baselines.

\begin{table}[!ht]
    \centering
    \small
    \begin{tabular}{l|ll}
    \hline\hline
    Estimator & Neg. ELBO\\
    \hline
    RLOO & $104.03 \pm 0.23$\\
    DisARM-Tree & $103.10 \pm 0.25$\\
    STGS & 97.32 $\pm$ 0.20\\
    GR-MC100$^*$ & 110.74 $\pm$ 1.23 \\
    GST-1.0 & {\bf 96.09 $\pm$ 0.25} \\
    GST-1.2 &  96.16 $\pm$ 0.32 \\
    \hline\hline
    \end{tabular}
    \caption{\textbf{Comparison of Estimators on MNIST-VAE with~\citet{NEURIPS2021_cd0b43ea}}. The identical implementation (5 random seeds, 200 batch size, $5\times10^5$ training steps,  32 latent variables with 64 categories, optimizer, and architecture) is adopted. $^*$GR-MC100 fails (diverges and yields computational errors) around epoch 40 to 70 for all seeds at all temperatures. Here, we report the numbers at temperature=0.5 right before it fails. We hypothesize that the failure can be attributed to its complicated resampling process.}
    \label{tbl:vae-compare-couple}
\end{table}

\subsubsection{Computational Efficiency}
Table~\ref{tbl:computation} is the computational efficiency comparison for the MNIST-VAE task on one Nvidia 1080-Ti GPU. GST is as efficient as STGS. In contrast, GR-MCK performs a resampling for $K$ times, consuming more memory. For the 1000 sample case, it is not possible for the GPU to parallelize all computation at once, so longer time is needed.
\begin{table}[!ht]
    \small
    \centering
    \begin{tabular}{l|cc}
        Estimator & Allocated GPU Mem. & Time\\ \hline
        STGS & 0.023 GB & 5.03 sec/epoch\\
        GST-1.0 & 0.023 GB & 5.04 sec/epoch\\
        GR-MC100 & 0.081 GB & 5.09 sec/epoch\\
        GR-MC1000 & 0.602 GB & 8.73 sec/epoch
    \end{tabular}
    \caption{Computational Efficiency in Terms of Memory and Speed.}
    \label{tbl:computation}
\end{table}
\vspace{-3mm}

\subsection{ListOps}
For ListOps \citep{nangia2018listops}, we use the same model architecture as in \citet{choi2018learning}. In particular, we want to optimize:
\begin{equation}
    \underset{q_\theta(T|x)}{\E} [\log p_\phi(y|T,x)]
\end{equation}
Both $q_\theta$ and $p_\phi$ are modeled by NNs. Given an arithmetic expression with $M$ tokens, $m=m_1, m_2, \cdots, m_M$, $q_\theta$ is responsible for sampling a parse tree $T$ by merging a pair of adjacent tokens $m_i,m_{i+1}$ at a time. In other words, $m_i$ and $m_{i+1}$ are merged into one new representation, and the total length is reduced by 1. After $M-1$ operations, the expression will end up with only one representation, from which the model predicts the final evaluation results and optimizes the parameters using the cross-entropy loss. As the construction of $T$ requires discrete sampling,  STGS is used with hard samples for training~\citep{choi2018learning,paulus2021raoblackwellizing}.
To validate the effectiveness of our GST estimator, we replace STGS with our estimator and measure the final accuracy of answer prediction. 


\begin{table}[!ht]
    \centering
    \begin{tabular}{cl|ll}
    \hline\hline
    Temperature & Estimator & Accuracy &  Std.\\
    \hline
    \multirow{6}{*}{1.0} & STGS & 0.659 & 0.006\\
    &GR-MC100 & 0.651 & 0.009\\
    &GST-0.8 & 0.660 & 0.010\\
    &GST-1.0 & \bf 0.662 & \bf 0.005\\
    &GST-1.2 & 0.660 & 0.011\\
    &GST-pi & 0.659 & 0.006\\
    \hline
    \multirow{6}{*}{0.1} & STGS & 0.645 & 0.014\\
    &GR-MC100 & 0.637 & 0.049\\
    &GST-0.8 & 0.664 & 0.020\\
    &GST-1.0 & \bf 0.664 & \bf 0.012\\
    &GST-1.2 & 0.660 & 0.018\\
    &GST-pi & 0.659 & 0.015\\
    \hline
    \multirow{6}{*}{0.01} & STGS & 0.479 & 0.258\\
    &GR-MC100 & \bf 0.662 & \bf 0.002\\
    &GST-0.8 & 0.660 & 0.012\\
    &GST-1.0 & 0.661 & 0.004\\
    &GST-1.2 & 0.661 & 0.004\\
    &GST-pi & 0.652 & 0.018\\
    \hline\hline
    \end{tabular}
    \caption{\textbf{Comparison of estimators on ListOps.} We set the maximum sequence length $M$ to 100. We run five different seeds to get the average accuracy and standard deviation.}
    \label{tbl:listops-compare}
\end{table}

Table~\ref{tbl:listops-compare} shows that the estimators perform differently on the ListOps task.
We first note that GST-1.0 performs the best (in both accuracy and variance) among all other variants in the GST family, corroborating our expected gap derivation in \S\ref{sec:strict_gap}. Therefore, we conclude that 1.0 can be chosen as the default gap for GST.
Next, we compare GST-1.0 with STGS, the baseline estimator without an additional variance reduction technique. We can see that GST consistently outperforms STGS across all temperatures with lower variances. The performance difference is especially pronounced when a very low temperature (0.01) is used.
Finally, we compare GST with the strongest baseline, GR-MC100. We achieve modest but better performances on two temperatures, 1.0 and 0.1. When the lowest temperature of 0.01 is used, GST performs on par with GR-MC100.

\section{Conclusion}
We introduce the Gapped Straight-Through (GST) estimator, a gradient estimator for discrete random variables. We derive the properties of Straight-Through Gumbel-Softmax and use them to design the GST estimator. An ablation study shows these properties are essential for good performance. Compared to prior work, the GST estimator enables variance reduction without resampling. Empirically, the GST estimator works with reduced variances and better performance on the MNIST-VAE and ListOps tasks. We hope the GST estimator helps accelerate the training speed of downstream applications while keeping low variance and good performance.

\section*{Acknowledgement}
We thank the anonymous reviewers for their insightful feedback and suggestions.

\bibliography{mybib}

\begin{thebibliography}{36}
\providecommand{\natexlab}[1]{#1}
\providecommand{\url}[1]{\texttt{#1}}
\expandafter\ifx\csname urlstyle\endcsname\relax
  \providecommand{\doi}[1]{doi: #1}\else
  \providecommand{\doi}{doi: \begingroup \urlstyle{rm}\Url}\fi

\bibitem[Abramowitz et~al.(1988)Abramowitz, Stegun, and
  Romer]{abramowitz1988handbook}
Abramowitz, M., Stegun, I.~A., and Romer, R.~H.
\newblock Handbook of mathematical functions with formulas, graphs, and
  mathematical tables, 1988.

\bibitem[Bengio et~al.(2013)Bengio, L{\'e}onard, and Courville]{bengio2013st}
Bengio, Y., L{\'e}onard, N., and Courville, A.
\newblock Estimating or propagating gradients through stochastic neurons for
  conditional computation.
\newblock \emph{arXiv preprint arXiv:1308.3432}, 2013.

\bibitem[Chen et~al.(2021)Chen, Tworek, Jun, Yuan, Pinto, Kaplan, Edwards,
  Burda, Joseph, Brockman, et~al.]{chen2021evaluating}
Chen, M., Tworek, J., Jun, H., Yuan, Q., Pinto, H. P. d.~O., Kaplan, J.,
  Edwards, H., Burda, Y., Joseph, N., Brockman, G., et~al.
\newblock Evaluating large language models trained on code.
\newblock \emph{arXiv preprint arXiv:2107.03374}, 2021.

\bibitem[Choi et~al.(2018)Choi, Yoo, and Lee]{choi2018learning}
Choi, J., Yoo, K.~M., and Lee, S.-g.
\newblock Learning to compose task-specific tree structures.
\newblock In \emph{Thirty-Second AAAI Conference on Artificial Intelligence},
  2018.

\bibitem[Chung et~al.(2017)Chung, Ahn, and Bengio]{Chung2017st}
Chung, J., Ahn, S., and Bengio, Y.
\newblock Hierarchical multiscale recurrent neural networks.
\newblock In \emph{5th International Conference on Learning Representations,
  {ICLR} 2017, Toulon, France, April 24-26, 2017, Conference Track
  Proceedings}. OpenReview.net, 2017.

\bibitem[Dong et~al.(2020)Dong, Mnih, and Tucker]{disarm}
Dong, Z., Mnih, A., and Tucker, G.
\newblock Disarm: An antithetic gradient estimator for binary latent variables.
\newblock In Larochelle, H., Ranzato, M., Hadsell, R., Balcan, M., and Lin, H.
  (eds.), \emph{Advances in Neural Information Processing Systems}, volume~33,
  pp.\  18637--18647. Curran Associates, Inc., 2020.

\bibitem[Dong et~al.(2021)Dong, Mnih, and Tucker]{NEURIPS2021_cd0b43ea}
Dong, Z., Mnih, A., and Tucker, G.
\newblock Coupled gradient estimators for discrete latent variables.
\newblock In Ranzato, M., Beygelzimer, A., Dauphin, Y., Liang, P., and Vaughan,
  J.~W. (eds.), \emph{Advances in Neural Information Processing Systems},
  volume~34, pp.\  24498--24508. Curran Associates, Inc., 2021.

\bibitem[Fan \& Wang(2021)Fan and Wang]{fan2021soft}
Fan, T.-H. and Wang, Y.
\newblock Soft actor-critic with integer actions.
\newblock \emph{arXiv preprint arXiv:2109.08512}, 2021.

\bibitem[Glynn(1990)]{glynn1990reinforce}
Glynn, P.~W.
\newblock Likelihood ratio gradient estimation for stochastic systems.
\newblock \emph{Communications of the ACM}, 33\penalty0 (10):\penalty0 75--84,
  1990.

\bibitem[Goodfellow et~al.(2020)Goodfellow, Pouget-Abadie, Mirza, Xu,
  Warde-Farley, Ozair, Courville, and Bengio]{goodfellow2020generative}
Goodfellow, I., Pouget-Abadie, J., Mirza, M., Xu, B., Warde-Farley, D., Ozair,
  S., Courville, A., and Bengio, Y.
\newblock Generative adversarial networks.
\newblock \emph{Communications of the ACM}, 63\penalty0 (11):\penalty0
  139--144, 2020.

\bibitem[Gu et~al.(2016)Gu, Levine, Sutskever, and Mnih]{gu15muprop}
Gu, S., Levine, S., Sutskever, I., and Mnih, A.
\newblock Muprop: Unbiased backpropagation for stochastic neural networks.
\newblock In Bengio, Y. and LeCun, Y. (eds.), \emph{4th International
  Conference on Learning Representations, {ICLR} 2016, San Juan, Puerto Rico,
  May 2-4, 2016, Conference Track Proceedings}, 2016.

\bibitem[Hinton et~al.(2012)Hinton, Srivastava, and Swersky]{hinton2012neural}
Hinton, G., Srivastava, N., and Swersky, K.
\newblock Neural networks for machine learning.
\newblock \emph{Coursera, video lectures}, 2012.

\bibitem[Ho \& Ermon(2016)Ho and Ermon]{ho2016generative}
Ho, J. and Ermon, S.
\newblock Generative adversarial imitation learning.
\newblock \emph{Advances in neural information processing systems},
  29:\penalty0 4565--4573, 2016.

\bibitem[Jang et~al.(2017)Jang, Gu, and Poole]{Jang2017gumbel}
Jang, E., Gu, S., and Poole, B.
\newblock Categorical reparameterization with gumbel-softmax.
\newblock In \emph{5th International Conference on Learning Representations,
  {ICLR} 2017, Toulon, France, April 24-26, 2017, Conference Track
  Proceedings}. OpenReview.net, 2017.

\bibitem[Kingma \& Welling(2019)Kingma and Welling]{kingma2019introduction}
Kingma, D.~P. and Welling, M.
\newblock An introduction to variational autoencoders.
\newblock \emph{arXiv preprint arXiv:1906.02691}, 2019.

\bibitem[Kingma et~al.(2014)Kingma, Mohamed, Rezende, and
  Welling]{kingma2014semi}
Kingma, D.~P., Mohamed, S., Rezende, D.~J., and Welling, M.
\newblock Semi-supervised learning with deep generative models.
\newblock In \emph{Advances in neural information processing systems}, pp.\
  3581--3589, 2014.

\bibitem[Korshunov \& Marcel(2018)Korshunov and Marcel]{korshunov2018deepfakes}
Korshunov, P. and Marcel, S.
\newblock Deepfakes: a new threat to face recognition? assessment and
  detection.
\newblock \emph{arXiv preprint arXiv:1812.08685}, 2018.

\bibitem[Li et~al.(2017)Li, Song, and Ermon]{li2017infogail}
Li, Y., Song, J., and Ermon, S.
\newblock Infogail: Interpretable imitation learning from visual
  demonstrations.
\newblock In \emph{Proceedings of the 31st International Conference on Neural
  Information Processing Systems}, pp.\  3815--3825, 2017.

\bibitem[Lowe et~al.(2017)Lowe, WU, Tamar, Harb, Pieter~Abbeel, and
  Mordatch]{ryan2017multiagent}
Lowe, R., WU, Y., Tamar, A., Harb, J., Pieter~Abbeel, O., and Mordatch, I.
\newblock Multi-agent actor-critic for mixed cooperative-competitive
  environments.
\newblock In Guyon, I., Luxburg, U.~V., Bengio, S., Wallach, H., Fergus, R.,
  Vishwanathan, S., and Garnett, R. (eds.), \emph{Advances in Neural
  Information Processing Systems}, volume~30. Curran Associates, Inc., 2017.

\bibitem[Maddison et~al.(2014)Maddison, Tarlow, and Minka]{chris2014gumbel}
Maddison, C.~J., Tarlow, D., and Minka, T.
\newblock A* sampling.
\newblock In Ghahramani, Z., Welling, M., Cortes, C., Lawrence, N., and
  Weinberger, K.~Q. (eds.), \emph{Advances in Neural Information Processing
  Systems}, volume~27. Curran Associates, Inc., 2014.

\bibitem[Maddison et~al.(2017)Maddison, Mnih, and Teh]{Chris2017gumbel}
Maddison, C.~J., Mnih, A., and Teh, Y.~W.
\newblock The concrete distribution: {A} continuous relaxation of discrete
  random variables.
\newblock In \emph{5th International Conference on Learning Representations,
  {ICLR} 2017, Toulon, France, April 24-26, 2017, Conference Track
  Proceedings}. OpenReview.net, 2017.

\bibitem[Martins \& Astudillo(2016)Martins and Astudillo]{martins16sparsemax}
Martins, A. F.~T. and Astudillo, R.~F.
\newblock From softmax to sparsemax: A sparse model of attention and
  multi-label classification.
\newblock In \emph{Proceedings of the 33rd International Conference on
  International Conference on Machine Learning - Volume 48}, ICML'16, pp.\
  1614–1623. JMLR.org, 2016.

\bibitem[Mnih \& Gregor(2014)Mnih and Gregor]{mnih14nvil}
Mnih, A. and Gregor, K.
\newblock Neural variational inference and learning in belief networks.
\newblock In Xing, E.~P. and Jebara, T. (eds.), \emph{Proceedings of the 31st
  International Conference on Machine Learning}, volume~32 of \emph{Proceedings
  of Machine Learning Research}, pp.\  1791--1799, Bejing, China, 22--24 Jun
  2014. PMLR.

\bibitem[Mnih \& Rezende(2016)Mnih and Rezende]{mnih16nvil}
Mnih, A. and Rezende, D.~J.
\newblock Variational inference for monte carlo objectives.
\newblock In \emph{Proceedings of the 33rd International Conference on
  International Conference on Machine Learning - Volume 48}, ICML'16, pp.\
  2188–2196. JMLR.org, 2016.

\bibitem[Nangia \& Bowman(2018)Nangia and Bowman]{nangia2018listops}
Nangia, N. and Bowman, S.~R.
\newblock Listops: A diagnostic dataset for latent tree learning.
\newblock \emph{arXiv preprint arXiv:1804.06028}, 2018.

\bibitem[Oord et~al.(2016)Oord, Dieleman, Zen, Simonyan, Vinyals, Graves,
  Kalchbrenner, Senior, and Kavukcuoglu]{oord2016wavenet}
Oord, A. v.~d., Dieleman, S., Zen, H., Simonyan, K., Vinyals, O., Graves, A.,
  Kalchbrenner, N., Senior, A., and Kavukcuoglu, K.
\newblock Wavenet: A generative model for raw audio.
\newblock \emph{arXiv preprint arXiv:1609.03499}, 2016.

\bibitem[Paulus et~al.(2020)Paulus, Choi, Tarlow, Krause, and
  Maddison]{paulus2020gradient}
Paulus, M.~B., Choi, D., Tarlow, D., Krause, A., and Maddison, C.~J.
\newblock Gradient estimation with stochastic softmax tricks.
\newblock \emph{arXiv preprint arXiv:2006.08063}, 2020.

\bibitem[Paulus et~al.(2021)Paulus, Maddison, and
  Krause]{paulus2021raoblackwellizing}
Paulus, M.~B., Maddison, C.~J., and Krause, A.
\newblock Rao-blackwellizing the straight-through gumbel-softmax gradient
  estimator.
\newblock In \emph{International Conference on Learning Representations}, 2021.

\bibitem[Radford et~al.(2019)Radford, Wu, Child, Luan, Amodei, Sutskever,
  et~al.]{radford2019language}
Radford, A., Wu, J., Child, R., Luan, D., Amodei, D., Sutskever, I., et~al.
\newblock Language models are unsupervised multitask learners.
\newblock \emph{OpenAI blog}, 1\penalty0 (8):\penalty0 9, 2019.

\bibitem[Rezende \& Mohamed(2015)Rezende and Mohamed]{rezende2015variational}
Rezende, D. and Mohamed, S.
\newblock Variational inference with normalizing flows.
\newblock In \emph{International conference on machine learning}, pp.\
  1530--1538. PMLR, 2015.

\bibitem[Ruthotto \& Haber(2021)Ruthotto and Haber]{ruthotto2021introduction}
Ruthotto, L. and Haber, E.
\newblock An introduction to deep generative modeling.
\newblock \emph{GAMM-Mitteilungen}, pp.\  e202100008, 2021.

\bibitem[Song et~al.(2021)Song, Sohl-Dickstein, Kingma, Kumar, Ermon, and
  Poole]{song2021scorebased}
Song, Y., Sohl-Dickstein, J., Kingma, D.~P., Kumar, A., Ermon, S., and Poole,
  B.
\newblock Score-based generative modeling through stochastic differential
  equations.
\newblock In \emph{International Conference on Learning Representations}, 2021.

\bibitem[Tai et~al.(2015)Tai, Socher, and Manning]{tai15treelstm}
Tai, K.~S., Socher, R., and Manning, C.~D.
\newblock Improved semantic representations from tree-structured long
  short-term memory networks.
\newblock In \emph{Proceedings of the 53rd Annual Meeting of the Association
  for Computational Linguistics and the 7th International Joint Conference on
  Natural Language Processing (Volume 1: Long Papers)}, pp.\  1556--1566,
  Beijing, China, July 2015. Association for Computational Linguistics.

\bibitem[Tang \& Agrawal(2020)Tang and Agrawal]{tang2020discretizing}
Tang, Y. and Agrawal, S.
\newblock Discretizing continuous action space for on-policy optimization.
\newblock In \emph{Proceedings of the AAAI Conference on Artificial
  Intelligence}, volume~34, pp.\  5981--5988, 2020.

\bibitem[Williams(1992)]{williams1992reinforce}
Williams, R.~J.
\newblock Simple statistical gradient-following algorithms for connectionist
  reinforcement learning.
\newblock \emph{Machine learning}, 8\penalty0 (3):\penalty0 229--256, 1992.

\bibitem[Yang et~al.(2017)Yang, Hu, Salakhutdinov, and
  Berg-Kirkpatrick]{yang2017improved}
Yang, Z., Hu, Z., Salakhutdinov, R., and Berg-Kirkpatrick, T.
\newblock Improved variational autoencoders for text modeling using dilated
  convolutions.
\newblock In \emph{International conference on machine learning}, pp.\
  3881--3890. PMLR, 2017.

\end{thebibliography}
\bibliographystyle{icml2022}

\newpage
\appendix
\onecolumn

\appendix
\setcounter{lemma}{0}
\setcounter{fact}{0}
\section{Appendix}

\subsection{Proof of Lemma 1}
\begin{fact}
Let $\ell = \text{logit}_{\theta_0}$ be the shorthand of the logit vector. The conditional expression of the Gumbel-Softmax's logit is:
\begin{equation}
[\ell + \bm G |\bm D=e_i]_j \overset{d}{=} \begin{cases}
	-\log \frac{\bm E_i}{Z} & \text{if~}j= i\\
	-\log\Big( \frac{\bm E_j}{e^{\ell_j}} + \frac{\bm E_i}{Z} \Big) & \text{o.w.}
	\end{cases},\quad j\in[1,...,n],
	\label{eq:logit-cond}
\end{equation}
$Z=\sum_{j} e^{\ell_j}$ is the partition function. $\bm E_i\overset{\text{i.i.d.}}{\sim}\text{Exp}(1)$ is a standard exponential random variable.
\end{fact}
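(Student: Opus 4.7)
The plan is to reduce the statement about Gumbel noise to a classical fact about independent exponential random variables via the standard bijection $\bm G_j = -\log\bm E_j$ with $\bm E_j\sim\text{Exp}(1)$ i.i.d. Under this change of variables, $\ell_j+\bm G_j = -\log(\bm E_j/e^{\ell_j})$, and if I define $F_j := \bm E_j/e^{\ell_j}$ then $F_j\sim\text{Exp}(e^{\ell_j})$ independently across $j$. Since $\arg\max_j(\ell_j+\bm G_j)=\arg\min_j F_j$, the Gumbel-Max event $\bm D=e_i$ is exactly the event $\{i=\arg\min_j F_j\}$. So the task becomes describing the joint conditional law of $(F_1,\dots,F_n)$ given that the minimum is attained at index $i$.

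Next I would invoke (and briefly justify via memorylessness) the classical structure theorem for independent exponentials: if $F_j\sim\text{Exp}(\lambda_j)$ independently and $I=\arg\min_j F_j$, then conditional on $\{I=i\}$, the minimum value $F_i$ is distributed as $\text{Exp}\!\bigl(\sum_j\lambda_j\bigr)$ and the overshoots $\{F_j-F_i\}_{j\neq i}$ are mutually independent of $F_i$ and of each other, with $F_j-F_i\sim\text{Exp}(\lambda_j)$. With $\lambda_j=e^{\ell_j}$ and $\sum_j\lambda_j=Z$, I can re-express this in terms of fresh i.i.d.\ standard exponentials $\bm E_i,\bm E_j\sim\text{Exp}(1)$:
\begin{equation*}
F_i\mid\bm D=e_i\ \overset{d}{=}\ \frac{\bm E_i}{Z},\qquad F_j\mid\bm D=e_i\ \overset{d}{=}\ \frac{\bm E_i}{Z}+\frac{\bm E_j}{e^{\ell_j}}\quad(j\neq i),
\end{equation*}
jointly. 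Taking $-\log$ of both sides then yields exactly the two cases of Eq.~\eqref{eq:logit-cond}.

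The main obstacle is the conditional-independence claim for the overshoots, since after conditioning on a single event the original $\bm E_j$'s are no longer independent. I would handle this by the standard memorylessness argument: conditional on $F_i=t$ and on $F_j>t$ for all $j\neq i$, each residual $F_j-t$ is independently $\text{Exp}(e^{\ell_j})$ (exponential tails are scale-shift invariant under this conditioning), while integrating the joint density shows $F_i\mid \bm D=e_i\sim\text{Exp}(Z)$. Re-encoding these residuals as $\bm E_j/e^{\ell_j}$ and the minimum as $\bm E_i/Z$ produces the claimed joint representation, and since the map $x\mapsto -\log x$ is deterministic and applied componentwise, equality in distribution is preserved.
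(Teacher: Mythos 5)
Your proposal is correct and follows essentially the same route as the paper: convert Gumbels to exponentials via $\bm G_j=-\log\bm E_j$, identify $\{\bm D=e_i\}$ with the arg-min event for $\text{Exp}(e^{\ell_j})$ variables, apply the min/overshoot decomposition of independent exponentials, and take $-\log$. The only difference is presentational: the paper verifies the decomposition by explicitly factoring the joint density, while you cite it as a classical fact and sketch the memorylessness justification, which amounts to the same argument.
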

\begin{proof}
The expression is given in \citet{paulus2021raoblackwellizing}. Here, we provide a proof based on the joint density function. Let $\bm E=[\bm E_1,...,\bm E_N]$ be an N-dimensional standard exponential random vector. Because $\bm G\overset{d}{=}-\log \bm E$, we have 
$$\ell + \bm G  \overset{d}{=} \ell -\log \bm E = -\log \frac{\bm E}{e^{\ell}},$$
where $\bm E/e^{\ell}$ uses elementwise exponential and division. It is enough to focus on $\bm E/e^{\ell}$ conditioning on the ith element being the \emph{smallest} entry; namely,
\begin{equation}
    e_i=\bm D=\underset{j}{\arg\max}~\ell_j+\bm G_j = \underset{j}{\arg\min}~\bm E_j/e^{\ell_j}= \underset{j}{\arg\min}~\bm X_j.
    \label{eq:argmaxmin}
\end{equation}

Since each element $\bm X_j=\bm E_j/e^{\ell_j}$ follows $\text{Exp}(e^{\ell_j})$. Let $\lambda_j = e^{\ell_j}$, the joint density of $\bm X=\bm E/e^{\ell}$ without conditioning is 
$$f(x_1,...,x_N)=\Pi_{j=1}^N \lambda_j e^{-\lambda_jx_j}.$$

Note that $\underset{j}{\arg\min}~\bm E_j/\lambda_j\sim \text{Exp}(\sum_j \lambda_j)$. In order to derive the conditional density, we might want to take out the density of the smallest entry. Namely,

\begin{equation}
\begin{split}
&\prod_j \lambda_j e^{-\lambda_jx_j}\overset{(*)}{=}\lambda_i e^{-\sum_j\lambda_j x_i}\prod_{j\neq i} \underbrace{\lambda_j e^{-\lambda_j(x_j-x_i)}}_{\text{is~a~density~if~}x_j\geq x_i}\\
\overset{(**)}{=}&\sum_i \frac{\lambda_i}{\sum_j\lambda_j}\sum\nolimits_j\lambda_je^{-\sum_j\lambda_j x_i}\prod_{j\neq i}\lambda_j e^{-\lambda_j (x_j-x_i)}\\
\overset{(***)}{=}&\sum_i \mathbb{P}(\bm X_i=\min_j \bm X_j)f(x_i|\bm X_i=\min_j \bm X_j)\prod_{j\neq i}f(x_j|\bm X_j\geq x_i)\\
=&\sum_i \mathbb{P}(\bm X_i=\min_j \bm X_j)f(x_1,...,x_N|\bm X_i=\min_j \bm X_j).
\end{split}
\label{eq:condi-derivation}
\end{equation} 
Note that $(*)$ holds for all $i$ if we ignore making $\lambda_j e^{-\lambda_j(x_j-x_i)}$ a density. Because $\mathbb{P}(\bm X_i=\min_j \bm X_j)=\lambda_i/\sum_j\lambda_j$, $(**)$ rewrite the equation so that the expression is more relevant to $\min_j \bm X_j$ at Eq.~\eqref{eq:argmaxmin}. Finally, $(***)$ simply identifies $f(x_i|\bm X_i=\min_j \bm X_j)=\sum\nolimits_j\lambda_je^{-\sum_j\lambda_j x_i}$ and $f(x_j|\bm X_j\geq x_i)=\lambda_j e^{-\lambda_j(x_j-x_i)}$.

Since Eq.~\eqref{eq:condi-derivation} implies
$$f(x_1,...,x_N|\bm X_i=\min_j \bm X_j)=\sum\nolimits_j\lambda_je^{-\sum_j\lambda_j x_i}\prod_{j\neq i}\lambda_j e^{-\lambda_j (x_j-x_i)}=Ze^{-Z x_i}\prod_{j\neq i}e^{\ell_j} e^{-e^{\ell_j} (x_j-x_i)},$$
the random vector $\bm E/e^{\ell}$ conditioning on $\bm D=e_i$ is
$$[\bm E/e^{\ell}|\bm D=e_i]_j \overset{d}{=}\begin{cases}
\frac{\bm E_i}{Z} & \text{if~}j=i\\
\frac{\bm E_j}{e^j} + \frac{\bm E_i}{Z} & \text{o.w.}
\end{cases}.$$
Taking $-\log(\cdot)$ on this expression concludes the proof.
\end{proof}

\begin{lemma}
	$\text{Gap}(\theta_0|\bm D=e_i)=-\frac{\log(1-[p_{\theta_0}]_i)}{[p_{\theta_0}]_i}=\frac{\log\left(1+e^{\ell_i-s}\right)}{1-1/(1+e^{\ell_i-s})}$,
	where $s=\log\left(\sum_{j\neq i} e^{[\text{logit}_{\theta_0}]_j} \right)$ is the log-sum-exponential of the unselected logits.
\label{lemma:gap_1}
\end{lemma}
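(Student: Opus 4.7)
The starting point is the conditional representation stated in the Fact: given $\bm D=e_i$, the perturbed logits $\ell+\bm G$ can be written in closed form using two independent standard exponentials. Specifically, the selected coordinate equals $-\log(\bm E_i/Z)$ and each unselected coordinate equals $-\log(\bm E_j/e^{\ell_j}+\bm E_i/Z)$, where $Z=\sum_j e^{\ell_j}$ and the $\bm E_j$ are i.i.d.\ $\text{Exp}(1)$. Turning the maximum into a minimum through the monotonicity of $-\log$, the gap collapses into a single tidy log-ratio
$$\text{Gap}(\theta_0\mid\bm D=e_i)=\E\!\left[\log\!\left(1+\frac{Z}{\bm E_i}\min_{j\neq i}\frac{\bm E_j}{e^{\ell_j}}\right)\right].$$

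Next, I would invoke the standard fact that the minimum of independent exponentials with rates $e^{\ell_j}$ (for $j\neq i$) is again exponential with rate $e^s=\sum_{j\neq i}e^{\ell_j}$, and is independent of $\bm E_i$. Setting $\bm W=\min_{j\neq i}\bm E_j/e^{\ell_j}$ over $\bm E_i$, the ratio $\bm W/\bm E_i$ has a simple CDF $1-1/(1+e^s w)$ obtainable by a one-line integral. Substituting $v=e^s\bm W/\bm E_i$ and letting $a=Z/e^s=1/(1-[p_{\theta_0}]_i)$, the problem reduces to evaluating
$$I(a)=\int_0^\infty \frac{\log(1+av)}{(1+v)^2}\,dv.$$

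Now I would compute $I(a)$ by integration by parts (with $u=\log(1+av)$ and $dv'=(1+v)^{-2}dv$): the boundary term vanishes at both endpoints, and what remains is $\int_0^\infty a[(1+v)(1+av)]^{-1}dv$, which splits by partial fractions into $\tfrac{a}{1-a}\int_0^\infty[(1+v)^{-1}-(1/a+v)^{-1}]dv$. Although each piece diverges individually, their combined antiderivative $\log\tfrac{1+v}{1/a+v}$ is bounded, giving $I(a)=-a\log a/(1-a)$. Plugging back $a=1/(1-[p_{\theta_0}]_i)$ and simplifying produces the first form $-\log(1-[p_{\theta_0}]_i)/[p_{\theta_0}]_i$; the second form follows immediately from $[p_{\theta_0}]_i=e^{\ell_i-s}/(1+e^{\ell_i-s})$ and $1-[p_{\theta_0}]_i=1/(1+e^{\ell_i-s})$.

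The only genuinely tricky step is the partial-fractions integral, because the two pieces are each non-integrable and must be combined before taking limits — otherwise one loses the $\log a$ term. Everything else is bookkeeping: identifying the conditional law from the Fact, exploiting the ratio-of-exponentials structure, and translating between the two equivalent parameterizations in terms of $[p_{\theta_0}]_i$ and $\ell_i-s$.
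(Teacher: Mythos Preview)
Your argument is correct. Both proofs start from the same conditional representation (the Fact) and reduce the gap to $\E[\log(1+c\,\bm E_0/\bm E_i)]$ for two independent $\text{Exp}(1)$ variables, but then diverge. The paper's main proof integrates \emph{iteratively}: it first takes the expectation over $\bm E_0$, obtaining $e^r\Gamma(0,r)$ via the incomplete gamma function, and then integrates over $\bm E_i$, where the final limit is extracted using the series expansion $\Gamma(0,z)=-\gamma-\log z+O(z)$. Your route instead collapses the two exponentials into a \emph{single} ratio variable with density $1/(1+v)^2$ after rescaling, leaving one elementary integral $I(a)=\int_0^\infty \log(1+av)(1+v)^{-2}\,dv$ that yields to integration by parts and partial fractions. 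Your approach is more elementary---no special functions, no asymptotic expansions---and is close in spirit to the paper's ``alternative view'' in the appendix, which reaches the same formula by recognizing the gap as a truncated logistic expectation $\E[\bm X-d\mid \bm X\ge d]$. The paper's iterated-integral proof, by contrast, makes the exponential-integral structure explicit at the cost of heavier machinery. Your caution about combining the two divergent partial-fraction pieces before taking the limit is exactly right and is the only place one could slip.
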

\begin{proof}
Write the expected gap using the conditional logit expression Eq.~\eqref{eq:logit-cond}.
\begin{equation}
\begin{split}
\text{Gap}(\theta_0|\bm D=e_i)=&\E\left[~ \underset{j\neq i}{\min}~\ell_i+\bm G_i - (\ell_j+ \bm G_j)\Big|\bm D=e_i\right]\\
\overset{\eqref{eq:logit-cond}}{=}&\E~ \min_{j\neq i}~-\log \left(\frac{\bm E_i}{Z}\right) + \log\left( \frac{\bm E_j}{e^{\ell_j}} + \frac{\bm E_i}{Z} \right)\\
\overset{[p_{\theta_0}]_j=e^{\ell_j}/Z}{=}&\E~\min_{j\neq i}~\log\left(1+\frac{\bm E_j}{\bm E_i[p_{\theta_0}]_j}\right)=\E~\log\left(1+\frac{1}{\bm E_i}\min_{j\neq i}~\frac{\bm E_j}{[p_{\theta_0}]_j}\right)\\
\overset{}{=}&\E~\log\left(1+\frac{\bm E_0}{\bm E_i\sum_{j\neq i}[p_{\theta_0}]_j}\right)=\E~\log\left(1+\frac{\bm E_0}{\bm E_i(1-[p_{\theta_0}]_i)}\right).
\end{split}
\label{eq:gap-main}
\end{equation}
The last line follows from $\frac{\bm E_j}{[p_{\theta_0}]_j}\sim \text{Exp}([p_{\theta_0}]_j)$ and $\min_j \text{Exp}(\lambda_j)\overset{d}{=}\text{Exp}(\sum_j\lambda_j)$. Note $\bm E_0\sim\text{Exp}(1)$ is an exponential random variable that is independent of $\bm E_i$.

Consider the expectation of Eq.~\eqref{eq:gap-main} w.r.t. $\bm E_0$. It is enough to evaluate the following.
\begin{equation}
\begin{split}
\E_{\bm E_0}\Big[\log\Big( \frac{E_0}{r}+1 \Big)\Big]&=\int_0^\infty \log\Big(\frac{t}{r}+1\Big)e^{-t}dt\\
&=e^r\int_0^\infty \log\Big(\frac{t+r}{r}\Big)e^{-(t+r)}dt\\
&=e^r\int_r^\infty \log\Big(\frac{t}{r}\Big)e^{-t}dt\\
&=e^r\bigg(\Big[\log\Big(\frac{t}{r}\Big) e^{-t}\Big]_{t=\infty}^{t=r} + \int_r^\infty \frac{1}{t}e^{-t}dt\bigg)=e^r\Gamma(0,r).
\end{split}
\label{eq:E0}
\end{equation}
The last line uses integration by parts. Notice $\Gamma(0,r)=\int_r^\infty e^{-t}/t$ is the exponential integral as well as a special case of the incomplete gamma function $\Gamma(s,r)$. Since $E$ denotes an exponential random variable, we write the exponential integral as $\Gamma(0,r)$. Thus, Eq.~\eqref{eq:gap-main} becomes
$\E e^{\bm E_i (1-[p_{\theta_0}]_i)}\Gamma(0,\bm E_i(1-[p_{\theta_0}]_i))$, and we turn to another expectation:
\begin{equation}
\begin{split}
\E e^{\bm E_ir}\Gamma(0,\bm E_ir)&=\int_0^\infty e^{rx}\Gamma(0,rx)e^{-x}dx = \int_0^\infty e^{(r-1)x}\int_{rx}^\infty \frac{1}{t}e^{-t}dtdx\\
&=\int_0^\infty\int_0^{t/r} \frac{1}{t}e^{-t} e^{(r-1)x}dxdt=\int_0^\infty \frac{1}{r-1}\frac{1}{t}\left(e^{-t/r}-e^{-t}\right)dt\\
&=\frac{1}{r-1}\lim_{z\rightarrow 0^+}\Gamma(0,z/r)-\Gamma(0,z) = \frac{\log (r)}{r-1}.
\end{split}
\label{eq:Ej}
\end{equation}
The last equality uses the following series expansion \citep{abramowitz1988handbook}[p. 229, 5.1.11] and that $z\rightarrow 0^+$:
$$\Gamma(0,z)=-\gamma-\log(z)-\sum_{k=1}^\infty\frac{(-z)^k}{kk!}=-\gamma-\log(z) + O(z),$$
where $O(z)$ is the Big-O notation when $z\rightarrow 0^+$, indicating the set $\{f:~|f(z)|\leq Mz~\text{as}~z\rightarrow 0^+\}$ for $M>0$.
This implies $\Gamma(0,z/r)-\Gamma(0,z)=\log(r)+O(z)$ and hence the limit of Eq.~\eqref{eq:Ej} goes to $\log(r)/(r-1)$.

Substitute Eq.~\eqref{eq:E0} and \eqref{eq:Ej} into Eq.~\eqref{eq:gap-main}, we arrive at
$$\text{Gap}(\theta_0|\bm D=e_i)=-\frac{\log(1-[p_{\theta_0}]_i)}{[p_{\theta_0}]_i}.$$

Furthermore, let $s=\log\left(\sum_{j\neq i} e^{\ell_j} \right)$ be the log-sum-exponential of the unselected logits. Then, $1-[p_{\theta_0}]_i=e^s/(e^{\ell_i}+e^s) = 1/(1+e^{\ell_i - s})$ and $[p_{\theta_0}]_i=1-1/(1+e^{\ell_i-s})$. So we have an expression in logit difference:
$$\text{Gap}(\theta_0|\bm D=e_i)=\frac{\log\left(1+e^{\ell_i-s}\right)}{1-1/(1+e^{\ell_i-s})}.$$

\end{proof}

\subsection{An Alternative View to Lemma 1}
\label{appendix:view_to_gap}
Here, we provide an alternative view to Lemma~\ref{lemma:gap_1} based on the equivalent unselected logit $s=\log(\sum_{j\neq i} e^{\ell_j})$. The analysis is less complicated than the proof of Lemma~\ref{lemma:gap_1} and should give more intuition. 

Recall that the gap between the top-2 largest logit is $\bm G_i + \ell_i -\max_{j\neq i} (\bm G_j+\ell_j)$ and that $\ell_j + \bm G_j \overset{d}{=} -\log(\bm E_j/e^{\ell_i})$ where $\bm E_j$'s are i.i.d. $\text{Exp}(1)$ variables. Thus, the gap is reorganized as
\begin{equation}
\begin{split}
&\bm G_i + \ell_i -\max_{j\neq i} (\bm G_j+\ell_j)\overset{d}{=}\bm G_i + \ell_i -\max_{j\neq i} \left(-\log\frac{\bm E_j}{e^{\ell_j}}\right)\\
=&\bm G_i + \ell_i - \left(-\log \left(\min_{j\neq i} \frac{\bm E_j}{e^{\ell_j}}\right)\right) \overset{d}{=} \bm G_i + \ell_i - \left(-\log \left(\frac{\bm E_0}{e^s}\right)\right)\\
=& \bm G_i + \ell_i - (\bm G_0 + s).
\end{split}
\label{eq:gap-reorg}
\end{equation} 
The second line uses $\frac{\bm E_j}{e^{\ell_j}}\sim \text{Exp}(e^{\ell_j})$ and $\min_{j\neq i} \text{Exp}(e^{\ell_j})\overset{d}{=}\text{Exp}(e^s)$. $\bm E_0\sim\text{Exp}(1)$ and is independent of $\bm G_i$. The last line defines $\bm G_0=-\log(\bm E_0)\sim \text{Gumbel}(0,1)$ as another gumbel random variable. 

\emph{This means that the smallest gap is equivalent to the perturbed difference between $\ell_i$ and $s=\log(\sum\nolimits_{j\neq i}e^{\ell_j})$, and $s$ becomes the equivalent logit that summarizes all unselected logits.}

Since $\bm G_i,~\bm G_0\overset{i.i.d.}{\sim}\text{Gumbel}(0,1)$, we know $\bm X=\bm G_i-\bm G_0\sim \text{Logistic}(0,1)$ and Eq.~\eqref{eq:gap-reorg} is simply $\bm X-(s-\ell_i)$. Therefore, conditioning on $\bm G_i+\ell_i$ being the largest perturbed logit (i.e., $\bm D=e_i$), the expected gap is 
$$
\text{Gap}(\theta_0|\bm D=e_i)=\E[\bm X-d | \bm X-d\geq 0],~~~~~~d=s-\ell_i.
$$
Note that $\text{Logistic}(0,1)$ has a density function $f(x)=e^{-x}/(1+e^{-x})^2$. We know $\mathbb{P}(\bm X\geq d)=1-1/(1+e^{-d})$. The conditional expectation is evaluated as:
\begin{equation*}
\begin{split}
&\mathbb{P}(\bm X\geq d)\E[\bm X|\bm X\geq d]=\int_d^\infty x \frac{e^{-x}}{(1+e^{-x})^2}dx=\int_d^\infty x \frac{e^{x}}{(1+e^{x})^2}dx\overset{u=e^x}{=}\int_{1+e^d}^\infty \log(u-1)\frac{du}{u^2}\\
=&\frac{d}{1+e^d}+\int_{1+e^d}^\infty \frac{1}{u}\frac{1}{u-1}du=\frac{d}{1+e^d}-\log\frac{e^d}{1+e^d}=d\left(1-\frac{1}{1+e^{-d}}\right)+\log\left(1+e^{-d}\right).
\end{split}
\end{equation*}
Thereby, the expected gap equals to
\begin{equation}
\E[\bm X-d|\bm X\geq d]=d +\frac{\log\left(1+e^{-d}\right)}{1-1/(1+e^{-d})}-d=\frac{\log\left(1+e^{-d}\right)}{1-1/(1+e^{-d})}=\frac{\log\left(1+e^{\ell_i-s}\right)}{1-1/(1+e^{\ell_i-s})},
\label{eq:gap2}
\end{equation}
which is exactly the same as Lemma~\ref{lemma:gap_1}.

Recall the expected gap converges to $\ell_i-s$ when $\ell_i-s\gg 0$ and to $1$ when $\ell_i-s\ll 0$. We can understand this using the property of logistic distribution $\bm X\sim \text{Logistic}(0,1)$.

\begin{enumerate}
	\item[(a)] When $-d=\ell_i-s\gg 0$, $d$ is very negative and the conditional expectation $\E[\bm X-d|\bm X\geq d]$ is almost an unconditioned one, $\E[\bm X-d]$. Because $\E[\bm X]=0$, the expected gap converges to $-d=\ell_i-s$.
	\item [(b)] When $-d=\ell_i-s\ll 0$, $d$ is very positive. Because $\text{Logistic}(0,1)$ has an exponential tail (this is evident from its density function), the conditional expectation $\E[\bm X-d|\bm X\geq d]$ is approximated by the exponential distribution: $\E[\bm Y-d|\bm Y\geq d]$ with $\bm Y\sim \text{Exp}(1)$. Since $\bm Y|\bm Y\geq d \overset{d}{=} \text{Exp}(1)+d$, the expected gap converges to $1$ in this situation.
\end{enumerate} 

\subsection{Additional Experiments for MNIST-VAE at Low Temperatures}
\label{appendix:more_vae_experi}

As shown in section~\ref{sec:vae-comparison}, the estimators are not easy to train at low temperatures. Still, low temperatures are potentially beneficial because the estimators are nearly one-hot, which reduces the functional mismatch between discrete one-hot vectors and the Softmax outputs. Thus, we demonstrate a training technique to assist the training at low temperatures below.
\begin{itemize}
    \item When mod(batch index,$M$)$=0$, train at low temperatures (e.g., temp.=0.1 or 0.01).
    \item When mod(batch index,$M$)$\neq 0$, train at temperature $\text{mid\_temp}$.
    \item Test at low temperatures.
\end{itemize}
$M$ controls the frequency of the low-temperature training step. The mid temperature $\text{mid\_temp}$ is a temperature value at which the estimator is trained without any issue. The presence of $\text{mid\_temp}$ stabilizes the model after the potentially hazardous low-temperature training. We hence call this method the mixed temperature training.

For the experiments in Table~\ref{tbl:vae-compare2}, we select $(M, \text{mid\_temp})$=$(20, 0.5)$. Compared with Table~\ref{tbl:vae-compare}, the mixed temperature training allows both GST and GR-MC100 to properly perform at low temperatures. It also improves the STGS at temperature 0.1 (Avg. Neg. ELBO $\approx 128$ without mixed training and $\approx119$ with mixed training).

\begin{table}[!ht]
    \centering
    \begin{tabular}{cl|ll}
    \hline\hline
    Temperature & Estimator & Neg. ELBO &  Std.\\
    \hline
    \multirow{6}{*}{0.1} & STGS & 119.38 & 2.36\\
    &GR-MC100 & 117.70 & 3.64\\
    &GST-0.8 & 111.33 & 0.95\\
    &GST-1.0 & 107.73 & 1.10\\
    &GST-1.2 & 107.36 & 0.78\\
    &GST-pi & 109.47 & 1.05\\
    \hline
    \multirow{6}{*}{0.01} & STGS & 121.75 & 2.36\\
    &GR-MC100 & 117.69 & 2.13\\
    &GST-0.8 & 110.82 & 1.03\\
    &GST-1.0 & 107.90 & 1.04\\
    &GST-1.2 & 107.86 & 1.06\\
    &GST-pi & 109.08 & 1.26\\
    \hline\hline
    \end{tabular}
    \caption{\textbf{Comparison of estimators on MNIST-VAE with mixed temperature training.} The results are evaluated over ten different random seeds.}
    \label{tbl:vae-compare2}
\end{table}

\end{document}